\newtheorem{theorem}{Theorem}
\newcommand{\stkout}[1]{\ifmmode\text{\sout{\ensuremath{#1}}}\else\sout{#1}\fi}
\newcommand{\M}{\mathfrak{m}}
\newcommand{\MS}{\mathcal M}
\newtheorem{constraint}{Constraint}
\definecolor{Florian}{RGB}{255,0,0}
\definecolor{Geir}{RGB}{0,0,255}
\definecolor{AH}{RGB}{255,0,255}
\definecolor{todo}{RGB}{10,200,10}
\begin{document}

\title{Flexible Bayesian Nonlinear Model Configuration}

\author{\name Aliaksandr Hubin \email aliaksah@math.uio.no \\
        \addr Department of Mathematics, University of Oslo\\
       Oslo, 0316, Norway\\
       Norwegian Computing Center\\ 
       Oslo, N-0314, Norway\\
        \AND
       \name Geir Storvik \email geirs@math.uio.no \\
       \addr Department of Mathematics, University of Oslo\\
       Oslo, 0316, Norway\\
       \AND
       \name Florian Frommlet \email florian.frommlet@meduniwien.ac.at \\
       \addr CEMSIIS, Medical University of Vienna\\
       Vienna, 1090, Austria
       }

% For research notes, remove the comment character in the line below.
% \researchnote

\maketitle
\setcounter{page}{901}

\begin{abstract}
Regression models are used in a wide range of applications providing a powerful scientific tool for researchers from different fields. 
Linear, or simple parametric, models are often not sufficient to describe complex relationships between input variables and a response. Such relationships can be better described through flexible approaches such as neural networks, but this results in less interpretable models and potential overfitting. Alternatively, specific parametric nonlinear functions can be used, but the specification of such functions is in general complicated.
In this paper, we introduce a flexible approach for the construction and selection of highly flexible nonlinear parametric regression models.
Nonlinear features are generated hierarchically, similarly to deep learning, but have additional flexibility on the possible
types of features to be considered. This flexibility, combined with
variable selection, allows us to find a small set of \emph{important} features and thereby
more interpretable models.  Within the space of possible functions, a Bayesian approach, introducing priors for functions based on their complexity, is considered. 
A genetically modified mode jumping Markov chain Monte Carlo algorithm is adopted to perform Bayesian inference and estimate posterior probabilities for model averaging. 
In various applications, we illustrate how our approach is used to obtain meaningful nonlinear models. Additionally, we compare its predictive performance with several machine learning algorithms.
\end{abstract}

\section{Introduction}\label{section1}

Regression and classification methods are indispensable tools for answering scientific questions in almost all research areas. 
Modern technologies have led to the paradigm of machine learning, where large sets of explanatory variables are routinely considered. Deep learning procedures have become quite popular and highly successful in a variety of real-world applications \cite{Goodfellow-et-al-2016}. 
Such procedures often outperform traditional methods, even when the latter are carefully designed and reflect expert knowledge~\cite{refenes1994stock,razi2005comparative,adya1998ective,sargent2001comparison,kanter2015deep}. The main reason for this is that the features from the outer layer of the deep neural networks become highly predictive after being processed through numerous nonlinear transformations. Specific regularization techniques (dropout, $L_1$ and $L_2$ penalties on the weights,  etc.) have been developed for deep learning procedures to avoid overfitting to training data.
Nonetheless, one usually has to use huge data sets to be able to produce generalizable neural networks.

Deep learning models are typically heavily over-parametrized and difficult to interpret. These models are densely approximating the function of interest and transparency is traditionally not a goal in their applications. Depending on the context, this can be a more or less severe limitation. In many research areas, it is desirable to obtain interpretable (nonlinear) regression models rather than just some dense representation of them. Moreover, in situations where the law requires explainability, one needs a legally compliant and transparent method \cite{molnar2019,aas2019explaining}. This occurs, for example, in the ‘right to explanation’ of the European Union's General Data Protection Regulation.

 In this paper, we present an approach to construct regression models 
 which retain much of the flexibility of modern machine learning methods but achieve much simpler structures through prior constraints, both hard and soft, within a Bayesian framework. A new class of models is introduced which we call \emph{Bayesian generalized nonlinear regression models} (BGNLMs). This class is based on generalized linear models~\cite{nelder1972generalized} where the distribution of the observations is assumed to be from the exponential family, but with the mean parameter being a nonlinear function of the input variables. The specific modeling of this nonlinear dependency will include ideas related to neural networks. We will also briefly discuss how BGNLM can be extended to generalized linear mixed models, which include random effects to model overdispersion or specific dependence structures of observations.  
 
 A variety of classes of nonlinear models exist in the literature, see e.g~\citeA{turner2007generalized} and the references therein.
 Our approach extends these classes through a hierarchically defined construction.  The resulting model topology resembles highway networks~\cite{srivastava2015highway} and densely connected convolutional networks~\cite{huang2017densely}
 in that new features are constructed based on features of several previous layers. Our approach also allows for automatic feature selection and thereby automatic
architecture specification, which can sparsify the model drastically. In this sense, it resembles symbolic regression combined with genetic programming~\cite{koza1994genetic}. 
 As it typically holds for Bayesian approaches in general, we tend to obtain robust solutions with relatively little overfitting.

Fitting BGNLMs is based on an efficient search algorithm that combines ideas from MCMC, mode jumping, and genetic programming. A similar algorithm was previously introduced and justified in the context of Bayesian logic regression~\cite{hubin2018novel}. 
We demonstrate that automatic feature engineering within regression models combined with Bayesian variable selection and model averaging can retain the predictive abilities of advanced statistical models while keeping them reasonably simple, interpretable, and transparent.  

Like many other Bayesian approaches, our approach is computationally intensive. Although we provide a fully Bayesian framework, in practice we recommend a partially empirical Bayes approach where some of the parameters involved are estimated more directly from the data. Compared with the fully Bayesian approach, this results in a considerable increase in computational speed and more interpretable models while not losing too much in predictive performance.
Our current implementation can handle up to a few hundred input variables and several thousand observations. We hope that the development of more sophisticated numerical algorithms will make the approach applicable to even larger data sets in the future. 

One of our major goals is to illustrate the potential of BGNLM to find meaningful nonlinear models in applications. To demonstrate this, we retrieve several ground physical laws in closed form (by which we understand the commonly known equations for the corresponding physical laws) from raw data. We also demonstrate the power of our approach to retrieve complex logical expressions from a data-generative model. Furthermore, the predictive ability of BGNLM is compared with deep neural networks, random forests, boosting procedures, and several other machine learning techniques under various scenarios. Last but not least, we provide an experiment on the extension of BGNLM which allows selecting latent Gaussian variables and thus model possible over-dispersion and spatio-temporal structure between the observations.

The rest of the paper is organized as follows: The class of BGNLMs is mathematically defined in Section~\ref{section2}. In Section~\ref{section3}, we describe the algorithm used for inference. Section~\ref{extensions} considers the extension of BGNLM to mixed model settings and a fully Bayesian extension of BGNLM. In Section~\ref{section4}, the BGNLM is applied to several real data sets:        
The first examples are concerned with classification and regression tasks where the performance of our approach is compared with various competing statistical learning algorithms. 
Later examples have the specific goal of retrieving an interpretable model. Section~\ref{section5} concludes and provides some suggestions for further research. Additional examples and details are provided in the Appendix.

\section{{Bayesian Generalized Nonlinear Models}}\label{section2}

We want to model the relationship between $m$ explanatory variables and a response variable based on $n$ samples from a training data set. For $i=1,...,n$, let $Y_{i}$, denote the response data and $\bm{x}_{i} = (x_{i1}, \dots, x_{im})$  the corresponding $m$-dimensional vector of input covariates. The proposed model 
extends the framework of generalized linear models \cite[GLM]{McCullagh-Nelder-1989} to include a flexible class of nonlinear transformations of covariates. The members of this class,  $F_{j}(\bm{x},\bm\alpha_j)$, $j = 1,...,q$, will be called features, where $q$ is finite but potentially huge. Details of the feature generating process are provided in Section~\ref{sec:feature}. 
The BGNLM is then defined as follows: 
\begin{subequations}\label{themodeleq}
\begin{align} 
  Y|\mu,\phi \sim & \ \mathfrak{f}(y|\mu,\phi);\\
   \mathsf{h}(\mu) = &\ \beta_0  + \sum_{j=1}^{q} \gamma_{j}\beta_{j}F_{j}(\bm{x},\bm\alpha_j)\; .
   \label{DeepModel}
\end{align}
\end{subequations}
Here, $\mathfrak{f}(\cdot|\mu,\phi)$ is the density (mass) of a probability distribution from the exponential family with mean $\mu$ and dispersion parameter $\phi$, while $\mathsf{h}(\cdot)$ is a link function relating the mean to the underlying features.  Feature $F_j$ depends on a (potentially empty) set of internal parameters $\bm\alpha_j$ which is described below. The features enter the model with coefficients $\beta_j \in \mathbb{R}, j = 1,...,q$.  Our formulation in~\eqref{DeepModel} enumerates all possible $q$ features but uses binary variables $\gamma_j\in \{0,1\}$ to indicate whether the corresponding features are to be included in the model or not.
Priors for the different parameters of the model are specified in Section \ref{sec:priors} and are designed to favor sparse models.

\subsection{The Feature Generating Processes}\label{sec:feature}
The basic building block of neural networks is the neuron, which consists of a nonlinear transformation applied to a linear combination of input variables. 
In multi-layer neural networks, neurons are arranged in multiple layers leading to an iterative application of these specific nonlinear transformations. Our \emph{feature generating process} is based on the same type of transformation, where the construction of possible features is performed by recursively using nonlinear combinations of any previously defined features. In principle, these features are feature functions, but we will use the term features for a simpler notation. 
The generation of a new feature corresponds to a  mapping from a set of functions into a function space. 

New features are generated using a hierarchical procedure which is based on a mix of ideas from symbolic regression~\cite{koza1994genetic} and neural networks.  The procedure is initialized with the original set of input variables as features, i.e.  $F_j(\bm{x}_{i}) = x_{ij}$ for $j \in \{1,\dots,m\}$.
Assume that, at a certain point, a set of features
$\{F_k(\cdot,\bm\alpha_k),k\in A\}$ is generated, where $A \subset \{1,...,q\}$ are indices of the features that are included in the set. The nature of the parameter sets $\bm\alpha_k$  will be described in detail below. 
Then we define three different transformations to generate a new feature $F_j(\cdot,\bm\alpha_j)$:
\[
F_j(\bm{x},\bm{\alpha}_j)=
\begin{cases}
g_j(\alpha^{out}_{j,0}  + \sum\limits_{k \in A_j\subseteq A }\alpha^{out}_{j,k}F_{k}(\bm{x},\bm\alpha_{k}))&\text{a nonlinear \textit{projection}};\\
g_j(F_k(\bm{x},\bm{\alpha}_{k}))&\text{a \textit{modification}, $k\in A$};\\
F_k(\bm{x},\bm{\alpha}_{k})F_l(\bm{x},\bm{\alpha}_{l})&\text{a \textit{multiplication}, $k,l\in A$.}
\end{cases}
\]
The transformations as defined above are assumed to follow a nested structure in which previously defined features $\{F_k(\bm x,\bm\alpha_k)\}$ keep their $\bm\alpha_k$ parameters fixed. 
Section \ref{sub:Estimate_alpha}  describes two strategies for the specification of the parameters in this way, but also a more general version in which the parameters within the nested features are allowed to be updated with respect to new operations. A fully Bayesian framework is also considered in Section~\ref{sec:fullb}.

The first transformation called \textit{projection} relates directly to the neuron. It is similar to the transformations used in neural networks but with the added flexibility that the nonlinear transformation $g_j$ can be selected from a class of functions $\mathcal{G}$. All functions within $\mathcal{G}$ should have domain $\mathbb{R}$ and range within $\mathbb{R}$.  The linear combination is taken over a subset of existing features $\{F_{k}(\cdot,\bm\alpha_k),k\in A_j\}$ where $A_j\subseteq A$ and the cardinality of $|A_j|$ is larger than 1.  The set of parameters  $\bm\alpha_j$ is defined by considering two subsets  $\bm\alpha_j = \{\bm\alpha^{out}_j,\bm\alpha^{in}_j\}$. The first one, $\bm\alpha^{out}_j$, denotes the $|A_j|$ parameters plus the intercept $\alpha^{out}_{j,0}$ which describe the current projection, whereas $\bm\alpha^{in}_j = \cup_{k \in A_j} \bm\alpha_k$ collects all the parameters involved in the nested features of the projection. 

To facilitate the generation of parsimonious nonlinear features, we introduce two additional transformations: \textit{modification} and  \textit{multiplication}.
Note that for both modifications and multiplications, it holds that $\bm\alpha^{out}_j = \emptyset$. Modification allows for nonlinear transformations of an existing feature.
Multiplication of two different features corresponds to interactions in the language of statistics. The multiplication transformation is allowed to select the same feature twice, i.e. $l = k$ is admissible. In principle, both of these transformations can be seen as special cases of projection. Modification is a special case of projection for which $|A_j|=1$. To increase interpretability of the features, both multiplication (which can be seen as a special case of two projection operations with the $\exp(x)$ and $\log(x)$ transformations) and modification are defined as separate transformations.  This also allows limiting the BGNLMs to only include modifications and multiplications as discussed in Section~\ref{sec:gmjmcmc}.

\paragraph{Feature properties} Define the \emph{depth}, $d_j$, of a projection and modification based feature with index $j$ as the minimum number of nonlinear transformations applied recursively when generating that feature. For example, a feature $F_j(\bm x,\bm{\alpha}_j) = \sin\left(\cos\left(\tan(x_1)\right)+\exp(x_2)\right)$ has depth $d_j = 3$. If a multiplication is applied, the depth is defined as one plus the sum of depths of its operands. For example, $F_k(x,\bm{\alpha}_k) = x_2\exp(x_1)$ has depth $d_k = 2$ (where we have used that the depth of a linear component is zero). Define the \emph{local width}, $lw_j$, to be the number of previously defined features that are used to generate a new feature. This is $|A_j|$ for a projection, 1 for a modification, and 2 for a multiplication (including the case of $k = l$, see $x^2, x^3$ in Table~\ref{tab:list_features} below).  
Finally, the \textit{operations count}, $oc_j$, of a feature  is the total number of algebraic operations (additions, multiplications between the features, and transformations) used in its representation.  For example, $F_j(\bm x,\bm{\alpha}_j)=x$ has $oc_j=0$, $F_j(\bm x,\bm{\alpha}_j)=v(x)$ has $oc_j=1$, $F_j(\bm x,\bm{\alpha}_j)=v(u(x))$ has $oc_j=2$, and $F_j(\bm x,\bm{\alpha}_j)=v(u(x)+x)$ has $oc_j=3$. Further examples indicating the structure of the \emph{depth}, \emph{local width}, and \emph{oc} measures are given in Table~\ref{tab:list_features}.

\begin{table}[!ht]
    %\resizebox{\textwidth}{!}{%
    \tabcolsep=0.11cm
    \begin{tabular}{lccl|lccl}
    \hline
    Feature&$d$&$oc$&$lw$&Feature&$d$&$oc$&$lw$\\
    \hline
    $x$   &0&0&1\\
    \hline
    $u(x)$&1&1&1         &$x^2$&1&1&2\\
    $v(x)$&1&1&1&&\\
    \hline
    $u(u(x))$&2&2&1    &$xu(x)$&2&2&2\\  
    $v(u(x))$&2&2&1     &$xv(x)$&2&2&2\\ 
    $u(v(x))$&2&2&1      &$x^3$&2&2&2\\
    $v(v(x))$&2&2&1      &$u(x^2)$&2&2&1\\
    $u(x+u(x))$&2&3&2  &$v(x^2)$&2&2&1\\
    $v(x+u(x))$&2&3&2  &$u(x+x^2)$&2&3&2\\
    $u(x+v(x))$&2&3&2 &$v(x+x^2)$&2&3&2\\
    $v(x+v(x))$&2&3&2  &$u(u(x)+x^2)$&2&4&2\\
    $u(u(x)+v(x))$&2&4&2&$v(u(x)+x^2)$&2&4&2\\
    $v(u(x)+v(x))$&2&4&2&$u(v(x)+x^2)$&2&4&2\\
    $u(x+u(x)+v(x))$&2&5&3&$v(v(x)+x^2)$&2&4&2\\
    $v(x+u(x)+v(x))$&2&5&3&$u(x + u(x)+x^2)$&2&3&3\\
    &&&&$v(x + u(x)+x^2)$&2&5&3\\
    &&&&$u(x + v(x)+x^2)$&2&5&3\\
    &&&&$v(x + v(x)+x^2)$&2&5&3\\
    &&&&$u(u(x) + v(x)+x^2)$&2&6&3\\
    &&&&$v(u(x) + v(x)+x^2)$&2&6&3\\
    &&&&$u(x+u(x) + v(x)+x^2)$&2&7&4\\
    &&&&$v(x+u(x) + v(x)+x^2)$&2&7&4\\
    \hline 
    \end{tabular}
    %}
       \caption{Feature space with $m = 1$, $\mathcal{G} = \{u(x),v(x)\}$ and depth $d\leq 2$. Here $oc$ is the operations count, and $lw$ is the local width. The left panel includes the original $x$ and features generated by projection or modification while the right panel involve features generated by an additional multiplication transformation. Also, here $x^2$ is generated by a multiplication as $x\times x$, and $x^3$ as $x\times x\times x$.}
    \label{tab:list_features}
\end{table}

As an illustration, consider the case with $m=1$ and  $\mathcal{G}=\{u(\cdot),v(\cdot)\}$. Table~\ref{tab:list_features} lists all possible features of depth $d \leq 2$ together with their local widths and operations counts,  where the non-zero $\bm\alpha_j$'s have been set equal to one and the intercept to zero. Already this simple example indicates that
the number of features grows\textit{ super-exponentially} with depth. To see this more formally, observe that the number of features of depth $d$ obtained only from projections and modifications is of the form $q^l_d=|\mathcal{G}|\left(2^{{\sum_{t=0}^{d-1}q^l_t}}- 1\right) -\sum_{t=1}^{d-1}q^l_t $, where $q^l_0=m$ and $|\mathcal{G}|$ denotes the number of different functions included in $\mathcal{G}$. Hence, $q^l_d$ gives a lower bound for the total number of features of the corresponding depth which grows super-exponentially with $d$. The multiplication transformation further adds a considerable number of features of depth $d$ and the combinatorics gets a bit more complex. Let $q_d = q_d^p + q_d^*$, where $q_d^p$ is the number of features of depth $d$ resulting from projections and modifications,  whilst $q_d^*$ is the number of features resulting from multiplications. With $q_0 = m$ and $q^p_0 =  q^*_0 = 0$, the following recursive relationships hold:
\begin{align*}
q_d^p =&  |\mathcal{G}|\left(2^{{\sum_{t=0}^{d-1}q_t}}- 1\right) -\sum_{t=1}^{d-2}q_t - q_{d-1}^p, \\
%\intertext{and}
q_d^* =&  \left\{  \begin{array}{ll}
 \sum\limits_{t < s}  q_{t}q_{d-t-1} + \binom{1 + q_{s}}{2},   & d \mbox{ is odd, $s=(d-1)/2$;}\\
   \sum\limits_{t < s}  q_{t} q_{d-t-1},  & d \mbox{ is even, $s=(d-2)/2$.}
\end{array}
\right.  
\end{align*}
For our toy example, we thus obtain for projections and modifications: $q_0^p = 0, q_1^p = 2, q_2^p = 28$ and $q_3^p = 68\,719\,476\,703$; for multiplications:  $q_0^* = 0, q_1^* = 1, q_2^* = 3$ and $q_3^* = 37$. This gives in total $q_0 = 1, q_1 = 3, q_2 = 31$ and $q_3 = 68\,719\,476\,740$. Note that without multiplications the number of features (corresponding to the lower bound) would have been: $q_0^l = 1, q_1^l = 2, q_2^l = 12$ and $q_3^p = 8176$. For $d \leq 2$, these numbers exactly correspond to the number of features listed in Table~\ref{tab:list_features}. The super-exponential growth means in practise that only a very limited $d$ can be considered even for problems with few covariates. 

\paragraph{Connections to other models} The feature space constructed by the suggested feature generating process is extremely rich and encompasses features from numerous other popular statistical and machine learning approaches as special cases. If the set of nonlinear functions only consists of one specific function, for example, $\mathcal{G} = \{\sigma(x)\}$
where $\sigma(\cdot)$ is the sigmoid function, then the corresponding feature space includes numerous possible densely connected neural networks with the sigmoid activation function. 
Another important class of features included in the BGNLM framework consists of decision trees \cite{breiman1984classification}. Simple decision rules correspond to repeated use of the nonlinear function $g(x)=\text{I}(x\ge 1)$. 
Intervals and higher dimensional regions can be defined through multiplications of such terms. 
Multivariate adaptive regression splines \cite{friedman1991multivariate} are included by allowing a pair of piece-wise linear functions $g(x) = \max\{0,x-t\}$ and $g(x) = \max\{0,t-x\}$. Fractional polynomials~\cite{royston1997approximating} can also be easily included by adding appropriate power functions $g(x) = x^s$ to $\mathcal{G}$. 
Logic regression, characterized by features being logic combinations of binary covariates~\cite{ruczinski2003logic,hubin2018novel}, 
is also fully covered by BGNLM models. 
The BGNLM framework extends these alternatives and allows combinations of different types of features by
defining more than one function in $\mathcal{G}$, resulting in, for example, features like  $\left(0.5x_1+10x_2^{0.5}+3\text{I}(0.2x_2>1)+0.1\sigma(2.5x_3)\right)^2$. 

\subsubsection{Specification of \texorpdfstring{$\alpha$}{a} Parameters} \label{sub:Estimate_alpha}
For the general \textit{projection} 
transformation, one has to specify $\bm{\alpha}_j$ when generating a new feature.  In Section~\ref{sec:fullb}, we describe a fully Bayesian approach that introduces priors for the $\bm{\alpha}_j$ parameters. Unfortunately, the fully Bayesian option is currently not computationally feasible except for problems where the number of input variables is very small. Hence, we describe three different strategies of increasing sophistication to specify $\bm{\alpha}_j$. In the first two strategies, only $\bm{\alpha}_j^{out}$ are estimated whereas $\bm{\alpha}_j^{in}$ are kept fixed which is a restriction inspired by Cascade Learning \cite{fahlman1990cascade}. The third strategy estimates all $\bm{\alpha}_j$ parameters jointly when generating a new feature like in deep learning. All of these strategies are based on finding parameter values that give high explanatory power to $F_j(\bm x,\bm\alpha_j)$ regardless of other features involved in the model.

\paragraph{Strategy 1 (optimize, then transform, na{\i}ve)} Our simplest procedure to obtain $\bm{\alpha}_j$ is to fix $\bm{\alpha}^{in}_j$ from the nested features  and then compute maximum likelihood estimates for $\bm{\alpha}^{out}_j$ by applying model~\eqref{themodeleq} directly without considering the nonlinear transformation $g_j(\cdot)$:\begin{equation*} \label{Strategy_1}
\mathsf{h}(\mu) = \alpha^{out}_{j,0}  \; \; \; + \; \sum_{k \in A_j}\alpha^{out}_{j,k}F_{k}(\bm{x},\bm\alpha_k) \; .
\end{equation*}
This choice of parameter estimators for the generated features has several advantages. The nonlinear transformation $g_j(\cdot)$ is not involved when computing $\bm{\alpha}_j^{out}$. Therefore the procedure can easily be applied on many nonlinear transformations $g_j(\cdot)$ simultaneously. Also, functions that are not differentiable, like the characteristic functions of decision trees or the ReLU function, can be used. Furthermore, maximum likelihood estimation for generalized linear models induces a convex optimization problem, and the obtained $\bm{\alpha}_j^{out}$ are unique. However, neglecting the activation function $g_j(\cdot)$ and fixing $\bm{\alpha}^{in}_j$  results in a feature-generating process that might not deliver the best features in terms of prediction.

\paragraph{Strategy 2 (transform, then optimize, concave)} Like in Strategy 1, the  weights $\bm{\alpha}^{out}_j$  are estimated conditionally on $\bm{\alpha}^{in}_j$, but now optimization is performed after applying the transformation $g_j(\cdot)$. In other words, the weights are obtained as  maximum likelihood estimates for the following model:  
\begin{equation} \label{Strategy_2}
\mathsf{h}(\mu) = g_j\left(\alpha^{out}_{j,0}  \; \; \; + \; \sum_{k \in A_j}\alpha^{out}_{j,k}F_{k}(\bm{x},\bm\alpha_k)\right) \; .
\end{equation}
This strategy yields a particularly simple optimization problem if  $\mathsf{h}^{-1}(g_j(\cdot))$ is a concave function, in which case the estimates are uniquely defined. If we want to use gradient-based optimizers, we have to restrict $\mathsf{h}^{-1}(g_j(\cdot))$ to be continuous and differentiable in the regions of interest. Otherwise, gradient-free %continuous 
optimization techniques have to be applied. 

\paragraph{Strategy 3 (transform, then optimize across all layers, deep)} Similarly to Strategy~2, parameters are obtained as maximum likelihood estimates using  model~\eqref{Strategy_2}, but now we jointly estimate the outer $\bm{\alpha}^{out}_j$ together with the nested $\bm{\alpha}^{in}_j$ (the $\bm\alpha_k$ parameters entering from the nested features should in this case formally be denoted by $\bm\alpha_{j,k}$). Hence, the optimization is performed with respect to parameters across all layers.  All of the nonlinear functions involved have to be continuous and differentiable in the regions of interest in order to enable the use of gradient-based optimizers. 
A major drawback with this strategy is that it is not possible to utilize previous specifications of the parameters; all parameters need to be recomputed. 
There is also no guarantee of finding a unique global optimum of the likelihood of the feature, even if all the $g_j$-functions are concave. If gradient-free optimizers are used, the problem becomes extremely computationally demanding. Furthermore, different local optima define different features having structurally the same configuration.

\subsection{Bayesian Model Specifications} \label{sec:priors}

The feature generating process described in the previous section defines an extremely large and flexible feature space. In order to avoid overfitting, we will use a Bayesian approach with a prior giving preference to simple structures through both hard and soft regularization. In this subsection, we assume that the $\bm\alpha_j$ parameters are specified deterministically using one of the strategies described in Section~\ref{sub:Estimate_alpha}. A more general setting will be considered in Section~\ref{sec:fullb}.
Three hard constraints are defined to avoid potential overfitting and ill-posedness.
\begin{constraint}
 The depth of any feature involved is less than or equal to $D$.
\end{constraint}
\begin{constraint}
 The local width of a feature is less than or equal to $L$.
\end{constraint}
\begin{constraint}
The total number of features in a model is less than or equal to  $Q$.
\end{constraint}
\noindent
The first constraint ensures that the feature space is finite, while the second and third constraints further limit the number of possible features and models. %by  

To put model \eqref{themodeleq} into a Bayesian framework, one has to specify priors for all parameters involved. For notational simplicity, we use $p(\cdot)$ to denote a generic prior with its arguments specifying which parameters we consider.
The structure of a specific model is uniquely defined by the vector $\M = (\gamma_1,\dots,\gamma_q)$. We start with defining the prior for $\M$ by
\begin{align}
p(\M)\propto\ &\text{I}\left(|\M|\leq Q\right)\prod_{j=1}^q \rho(\gamma_j).\label{eq:modelprior}
\end{align}
Here, $|\M|=\sum_{j=1}^q\gamma_j$ is the number of features included in the model and $Q$ as specified above is the maximum number of features allowed per model. The factors $\rho(\gamma_j)$ are introduced to give smaller prior probabilities to more complex features. Specifically, we consider 
\begin{align}
\rho(\gamma_j)=\ &  a^{\gamma_jc(F_j(\cdot,\bm{\alpha}_j))}\label{glmgammaprior}
\end{align}
with $0<a<1$ and ${c(F_j(\cdot,\bm{\alpha}_j))}\ge 0$ being a non-decreasing measure for the complexity of the corresponding feature.  In case of $\gamma_j =  0$ it holds that  $\rho(\gamma_j) = 1$ and thus the prior probability for model $\M$ only consists of the product of $\rho(\gamma_j)$ for features included in the model.  
It follows that if $\M$ and $\M'$ are two vectors only differing in one component, say $\gamma_j'=1$ and $\gamma_j = 0$, then
\[
\frac{p(\M')}{p(\M)}=a^{c(F_j(\cdot,\bm{\alpha}_j))}<1
\]  
showing that larger models are penalized more. This result easily generalizes to the comparison of more different models and provides the basic intuition behind the chosen prior. 

The prior choice implies a distribution for the model size $|\M|$.  For $Q=q$ and a constant complexity value on all features, $|\M|$ follows a binomial distribution. With varying complexity measures,  $|\M|$ follows the \textit{{Poisson binomial}} distribution \cite{wang1993number} 
which is a unimodal distribution with
$E\left\{|\M|\right\}=\sum_{j=1}^q p_j$ and
$\text{Var}\left\{|\M|\right\}=\sum_{j=1}^qp_j(1-p_j)$
where $p_j=a^{c(F_j(\cdot,\bm{\alpha}_j))}/(1+a^{c(F_j(\cdot,\bm{\alpha}_j))})$. A truncated version of this distribution is obtained for $Q<q$.

The choices of $a$ and the complexity measure $c(F_j(\cdot,\bm{\alpha}_j))$ are crucial for the quality of the model prior. 
Choosing for example
 $a = e^{-1}$ and $c(F_j(\cdot,\bm{\alpha}_j)) = \log q_{d_j}$ (with $d_j$ being the depth of $F_j$)
 would for $\gamma_j = 1$ result in
$$
  a^{c(F_j(\cdot,\bm{\alpha}_j))} = 
  \frac{1}{q_{d_j}}  \;.
$$
Therefore, the multiplicative contribution of a specific feature with depth $d$ to the model prior will be indirectly proportional to the total number of features $q_{d}$ having the depth of $d$. As we have shown in Section~\ref{sec:feature}, $q_{d}$ is rapidly growing with the feature depth $d$. Therefore, this choice gives smaller prior probabilities for more complex features. The resulting penalty closely resembles the Bonferroni correction in multiple testing as discussed for example by \citeA{BGT08} in the context of modifications of the BIC and by~\citeA{scott} in a more general setting. Such a prior construction was suggested in \citeA{hubin2018novel} for Bayesian logic regressions. However, for BGNLM, the number $q_{d}$ involves nontrivial recursions and will in practice be difficult to compute. Even obtaining good approximations would be quite challenging and investing CPU time to compute these might result in drastic deterioration in the speed of inference. 

Instead of computing $q_{d}$, we consider an alternative based on the geometric distribution which was suggested in the context of Bayesian logic regression by \citeA{Fritsch1}. For a given logic tree, they simply penalize the total number of leaves. Logic regression is a special case of BGNLM with features constructed from logic combinations of binary input variables (leaves). In the BGNLM framework, the total number of leaves corresponds to the number of algebraic and logical operations involved.  We generalize this idea further and use the operations count $oc$ of a feature as a complexity measure for BGNLM. It can be seen in Table \ref{tab:list_features} that $oc$ is a rather parsimonious property of a feature which grows smoothly as its complexity is increased. 

There remains a question of how to choose the parameter $a$.  \citeA{Fritsch1} do not give any theoretical considerations but use $a = 1/2$ or $a = 1/\sqrt 2$ in their simulation settings. For the applications of Section \ref{section5}, we will use $a=e^{-2}$ when we are concerned with prediction and $a=e^{-\log n}$ when we are  concerned with model identification. These specific choices are inspired by modifications of AIC and BIC, respectively, which are designed to control the family-wise error rate (see \citeA{bogdan2020identifying} for further details).

To complete the Bayesian model, one needs to specify priors for the components of $\bm{\beta}$ for which $\gamma_j = 1$ and, if necessary, for the dispersion parameter $\phi$. 
\begin{align}
\bm{\beta},\phi|\M\sim&p(\bm{\beta}|\M,\phi)p(\phi|\M).\label{eq:prior.par}
\end{align}
Here, $\bm\beta$ are the  regression parameters given  model $\M$. For these parameters, mixtures of $g$-priors are known to have numerous desirable properties for Bayesian variable selection and model averaging~\cite{li2018mixtures}, but simpler versions, such as  Jeffreys prior, can also be considered. Prior distributions on $\bm{\beta}$ and $\phi$ are usually defined in a way to facilitate efficient computation of marginal likelihoods  (for example, by specifying conjugate priors). They should be carefully chosen for the applications of interest. Specific choices are described in Section \ref{section4} when considering different real data sets.

As described in Section \ref{sub:Estimate_alpha}, the  $\bm{\alpha}_j$ parameters are deterministically specified during the feature generating process (a priori)  and are not considered here as model parameters. A fully Bayesian approach for all parameters of BGNLM, including $\bm{\alpha}$,  is described in Section~\ref{sec:fullb}.

\section{Bayesian Inference}\label{section3}

Posterior marginal probabilities for the model structures are given by
\begin{equation}\label{PMP}
p(\M|\bm y)
=\frac{p(\M)p(\bm y|\M)}
      {\sum_{\M'\in\MS}
      p(\M')p(\bm y|\M')},\; 
\end{equation}
where $p(\bm y|\M)$ denotes the marginal likelihood of $\bm y$ given a specific model $\M$ and $\MS$ is the model space. The marginal inclusion probability for a specific feature $F_j(\bm x,\bm\alpha_j)$ can be derived from $p(\M|\bm y)$ through
\begin{equation}\label{marginal_inclusion}
p({\gamma}_{j}=1|\bm y) =  \sum_{\M:\gamma_j=1}p(\M|\bm y),
\end{equation}
while the posterior distribution of any statistic $\Delta$ of interest becomes
\begin{equation}
p(\Delta|\bm y) =  \sum_{\M \in\MS}{p(\Delta|\M,\bm y)p(\M|\bm y)} \; .
\end{equation}
Due to the huge size of $\MS$, it is not possible to calculate the sum in the denominator of~\eqref{PMP} exactly. A standard approach  is to construct an MCMC algorithm working on the combined space of parameters and models through a reversible jump MCMC algorithm~\cite{Green} where posterior marginal probabilities can be estimated through frequencies of visits. Constructing an efficient algorithm within such a framework is, however, notoriously difficult. 

Assuming the likelihoods $p(\bm y|\M)$ are available, a much more efficient approach is to utilize~\eqref{PMP} more directly.  
In this section, we will discuss an algorithmic approach for performing the calculations needed. The main tasks are (i) to calculate the marginal likelihoods
$p(\bm y|\M)$ for a given model and (ii) to search through the model space $\MS$. 
Based on the output from this algorithm, $p(\M|\bm y)$ is estimated according to
\begin{equation}
\widehat{p}(\M|\bm y) =  
\frac{p(\M)\hat p(\bm y|\M)}
       {\sum_{\M' \in \MS^*}{p(\M')\hat p(\bm y|\M')}}\,\text{I}(\M\in\MS^*), \; 
 \label{approxpost}
\end{equation}
where two approximations are applied: a suitable subset $\MS^* \subset \MS$  is used to estimate the denominator of~\eqref{PMP}, and 
$\hat p(\bm y|\M)$ is an approximation of $p(\bm y|\M)$ as described in Section~\ref{sec:marg}.
Low values of $p(\M)\hat p(\bm y|\M)$ induce both  low values of the numerator and small contributions to the denominator in~\eqref{PMP}, hence models with low mass $p(\M)\hat p(\bm y|\M)$ will have no significant influence on posterior marginal probabilities for other models. On the other hand, models with high values of $p(\M)\hat p(\bm y|\M)$ are important. It might be equally important to include \emph{regions} of the model space where no single model has a particularly large mass but there are many  models giving a moderate contribution.  Such regions of high posterior mass are particularly important for constructing a reasonable subset $\MS^* \subseteq  \MS$ as missing them can dramatically influence our posterior estimates. 
The set of models included in $\MS^*$ will be similar to those visited in a standard application of (reversible jump) MCMC but the posterior estimates are now based on the (estimated) likelihoods instead of the frequencies of visits.
Hence, each model only needs to be visited once.

\subsection{Calculation of the Marginal Likelihoods}\label{sec:marg}
An important task is to compute the integral 
\begin{align} \label{MarginalPosterior}
p(\bm y|\M)
= & \int_{{\Theta}_\M}p(\bm y|\bm{\theta},\M)p(\bm{\theta}|\M)d\bm{\theta},
\end{align}
where 
the vector of parameters $\bm\theta$  for a specified model $\M$ consists of the regression coefficients $\{\beta_j,j:\gamma_j=1\}$ for the features included and, possibly, the dispersion parameter $\phi$. 
Assuming that the $\bm\alpha_j$'s are fixed (like in the three strategies described in Section \ref{sub:Estimate_alpha}) the BGNLM~\eqref{themodeleq} becomes a standard GLM  in which either exact calculations of the marginal likelihoods are available~\cite[linear models with conjugate priors]{Clyde:Ghosh:Littman:2010}, or numerical approximations such as simple Laplace approximations~\cite{tierney1986accurate} or integrated nested Laplace approximations \cite{rue2009eINLA} can be efficiently applied.

\subsection{Search Through the Model Space}\label{sec:gmjmcmc}

The main challenges for efficient search through the model space are both the huge number of models and the presence of many local modes. Our algorithmic approach to this is to iteratively consider subsets of models and perform an efficient search within these subsets. In particular, subsets are generated using the methods inspired by ideas from genetic programming, while search within subsets is based on a recently published mode jumping MCMC (MJMCMC) algorithm \cite{hubin2018mode}.

The MJMCMC procedure was originally proposed by~\citeA{Tjelmeland99modejumping} for problems with continuous parameters and extended to model selection by~\citeA{hubin2018mode}. The MJMCMC is described in full detail in Algorithm~\ref{MJMCMCalg0}.
\begin{algorithm}[h]
\small
\caption{\label{MJMCMCalg0}MJMCMC, one iteration from current model $\M$.}
\begin{algorithmic}[1]
\State  Generate a large jump  $\M_0^*$ according to a  proposal distribution  $q_l(\M_{(0)}^*|\M)$.
\item Perform a local  optimization, defined through $\M_{(1)}^*\sim q_o(\M_{(1)}^*|\M_{(0)}^*)$.
\State Perform a small randomization  to generate the proposal $\M^*\sim q_r(\M^*|\M_{(1)}^*)$.
\item  Generate backwards auxiliary variables $\M_{(0)}\sim q_l(\M_{(0)}|\M^*)$,  $\M_{(1)}\sim q_o(\M_{(1)}|\M_{(0)})$.
\State Set
\[\M'=
\begin{cases}\M^*&\text{with probability $r_{mh}(\M,\M^*;\M_{(1)},\M_{(1)}^*)$;}\\
\M&\text{otherwise,}
\end{cases}
\]
where
\begin{equation}
r_{mh}^*(\M,\M^*;\M_{(1)},\M_{(1)}^*) = \min\left\{1,\frac{\pi(\M^*)q_r(\M|\M_{(1)})}{\pi(\M)q_r(\M^*|\M_{(1)}^*)}\right\}\label{locmcmcgen00}.
\end{equation}
\end{algorithmic}
\end{algorithm}
The basic idea is to make a large jump (including or excluding many model components) combined with local optimization within the discrete model space to obtain a proposal model with high posterior probability. Within a Metropolis-Hastings setting, a valid acceptance probability is constructed using symmetric backward kernels, which guarantees that the resulting Markov chain is ergodic and has the desired limiting distribution. For more details, see~\citeA{hubin2018mode}. 
The use of this algorithm requires the specification of two random steps: The large jump described by $q_l$ and the small randomization described by $q_r$. These distributions are defined through probabilities of changing an included feature to be excluded and vice versa, with a large probability within $q_l$ and a small probability within $q_r$. For the local optimization operation $q_o$, combinatorial optimization algorithms are applied, where we randomly choose between simulated annealing, greedy hill-climbing, and multiple try MCMC used as an optimization tool, see \citeA{hubin2018mode} for more details.

The MJMCMC algorithm  requires that all of the features defining the model space are known in advance and are all considered at each iteration. For BGNLMs, a major problem is that it is not possible to fully specify the space $\MS$ in advance (let alone storing all potential features). We, therefore, make use of a genetically modified MJMCMC algorithm (GMJMCMC), which was originally introduced in the context of logic regression by~\citeA{hubin2018novel}. The idea behind GMJMCMC is to apply the MJMCMC algorithm on a relatively small subset  of the model space $\MS$  in an iterative setting.  We start with an initial set of features $\mathcal{S}_0$  and then iteratively update the population of features $\mathcal{S}_t$, thus generating a sequence of so-called \emph{populations}  $\mathcal{S}_1,\mathcal{S}_2,...,\mathcal{S}_T$. Each  $\mathcal{S}_t$ is a set of $s$ features and forms a separate \textit{search space} for exploration through MJMCMC iterations.
Populations dynamically evolve, allowing GMJMCMC to explore different parts of the full model space.
Algorithm~\ref{gMJMCMCalg} 
summarizes the procedure.
\begin{algorithm}[h]
\small
\caption{GMJMCMC}\label{gMJMCMCalg}
\begin{algorithmic}[1]
\State Initialize $\mathcal{S}_{0}$
\State Run the MJMCMC algorithm within the search space $\mathcal{S}_0$ for $N_{init}$ iterations and use results to initialize $\mathcal{S}_1$.
\For{$t=1,...,T-1$} 
\State Run the MJMCMC algorithm within the search space $\mathcal{S}_t$ for $N_{expl}$ iterations.\State Generate a new population $\mathcal{S}_{t+1}$
\EndFor
\State Run the MJMCMC algorithm within the search space $\mathcal{S}_{T}$ for $N_{final}$ iterations.
\end{algorithmic}
\end{algorithm}
Details of the initialization and transition between feature populations are described below and were designed specifically for BGNLM. 
All models visited, including the auxiliary ones (used by MJMCMC to generate proposals), will be included in $\MS^*$. For these models, computed marginal likelihoods will also be stored, making the costly likelihood calculations only necessary for models that have not been visited before.

\paragraph{Initialization}
Define $\mathcal{F}_0$ to be the collection of the $m$ original covariates.
The algorithm is initialized by first applying marginal testing for the selection of a subset of size $q_0 \leq m$ from $\mathcal{F}_0$.
We denote these preselected components 
by $\mathcal{S}_0 = \{{x}_1,...,{x}_{q_0}\}$ 
where for notational convenience we have ordered indices according to preselection.
MJMCMC is then run for a given number of iterations $N_{init}$  on $\mathcal{S}_0$.

\paragraph{Transition between populations}
Members of the new population $\mathcal{S}_{t+1}$ are generated by two steps.
First, in a so-called \textit{filtration} step, a  subset of the current components is selected. Components having marginal probabilities  within the search space $\mathcal{S}_t$  above a selected threshold $\rho_{del}$ are always kept (corresponding to the reproduction step in genetic programming). Features with marginal inclusion probabilities below the threshold $\rho_{del}$ are kept with probabilities proportional to their marginal inclusion probabilities. 

For those components that are removed, replacements are made with new features.
The transformations defined in Section~\ref{sec:feature} are applied to generate more complex features, where they operate on  features selected from $\mathcal{S}_t\cup\mathcal{F}_0$. Each replacement is generated randomly by the projection transformation with probability $P_{pr}$, by the modification transformation with probability $P_{mo}$, by the multiplication transformation with probability $P_{mu}$ or by a new input variable with probability $P_{in}$, where $P_{pr}+P_{mo}+P_{mu}+P_{in} = 1$. If a modification or a projection is applied, a nonlinearity is chosen from $\mathcal{G}$ with  probabilities $P_\mathcal{G}$.  If the newly generated feature is already present in $\mathcal{S}_t$ or is linearly dependent with some currently present features, it is not considered for $\mathcal{S}_{t+1}$. The implemented algorithm offers the option that a subset of $\mathcal{S}_0$  is always kept in the population throughout the search. Furthermore, one can exclude modifications, multiplications or projections by setting  $P_{mo}$, $P_{mu}$ or $P_{pr}$ to 0, respectively. By excluding projections one substantially reduces the space of models, but on the other hand, the model is put into a fully Bayesian framework, which might be of high importance for some applications.

The following result is concerned with the consistency of probability estimates from the GMJMCMC algorithm. Similarly to standard MCMC algorithms, it guarantees exact answers when the number of iterations increases given that some regularity conditions are met. 

\begin{theorem}\label{th:GMJMCMC}
Let $\MS^*$ be the set of models visited through the GMJMCMC algorithm.   Define $M_{S_t}$ to be the set of models visited within search space $\mathcal{S}_t$. Assume that exact marginal likelihoods $p(\bm y|\M)$ are available and used. Also, assume that $|\mathcal{S}_t|=s\ge \min\{Q,L\}$.
Then the model estimates based on~\eqref{approxpost} will converge to the true model  probabilities as the number of iterations $T$ goes to $\infty$.
\end{theorem}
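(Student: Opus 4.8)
The plan is to prove the stronger statement that, with probability one, the algorithm visits \emph{every} model of $\MS$ after finitely many iterations; once this happens the estimator \eqref{approxpost} coincides exactly with the posterior \eqref{PMP}, so convergence is immediate. Two preliminary observations set this up. First, $\MS$ is finite: the hard constraint bounding feature depth by $D$ makes the feature space finite (as already noted after its statement), and the constraints on local width and on the number of features per model then bound the number of admissible models. Second, because exact marginal likelihoods are used, $\hat p(\bm y\,|\,\M)=p(\bm y\,|\,\M)$ for every visited model; hence on the event $\{\MS^*=\MS\}$ the indicator in \eqref{approxpost} is identically one and the normalising sum in its denominator runs over all of $\MS$, so $\widehat p(\M\,|\,\bm y)=p(\M\,|\,\bm y)$ for every $\M\in\MS$. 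It therefore suffices to show that $\MS^*=\MS$ after finitely many iterations almost surely.

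Fix a target model $\M\in\MS$. I would first argue that there is a population configuration whose induced search space contains all features of $\M$ and that is reachable from the initialization with positive probability: $\M$ has at most $Q$ features, each of local width at most $L$, so under the assumption $s\ge\min\{Q,L\}$ the transition mechanism of Algorithm~\ref{gMJMCMCalg} --- which can retain already-generated features and, through iterated projection, modification, multiplication and reintroduction of the original covariates, generate any admissible feature by composition --- admits a positive-probability sequence of population updates that assembles such a configuration. Within that search space, MJMCMC is an ergodic Metropolis--Hastings chain over the finitely many submodels \citep[by the construction of][]{hubin2018mode}, so $\M$ itself is a model that the exploration of this population reaches with positive probability.

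To conclude, I would use the hypothesis that $\{(\mathcal{S}_t,M_{\mathcal{S}_t})\}$ is an irreducible Markov chain over a finite collection of states. An irreducible Markov chain on a finite state space is recurrent, so every possible state is visited infinitely often almost surely; in particular the states in which the population is a configuration as constructed above and the set of explored models contains $\M$ --- which, by the previous paragraph, are reachable and hence possible --- are visited, so $\M\in\MS^*$ after finitely many iterations almost surely. Since $\MS$ is finite, a union bound over $\M\in\MS$ yields $\MS^*=\MS$ after finitely many iterations almost surely, and then the first paragraph gives $\widehat p(\cdot\,|\,\bm y)=p(\cdot\,|\,\bm y)$ for all sufficiently large $T$, which is stronger than the asserted convergence.

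The step I expect to be the main obstacle is turning the blanket irreducibility hypothesis into the concrete claim used above: one must verify that, for \emph{every} admissible model, the population-update moves can --- under the stated lower bound on $s$ --- simultaneously hold all of that model's features in a single population, that such a population is reached with positive probability, and that no model is permanently excluded because the features it needs are filtered out before they can be combined (this may require an argument that features can always be regenerated, or that $\mathcal{S}_0$ is retained throughout the search). The remaining ingredients --- finiteness of $\MS$, exactness under true marginal likelihoods, and ``recurrent state $\Rightarrow$ visited infinitely often'' for finite chains --- are routine.
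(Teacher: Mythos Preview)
Your proof is correct and follows essentially the same route as the paper's: reduce to showing $\MS^*=\MS$ (which, under exact marginal likelihoods, makes \eqref{approxpost} equal to \eqref{PMP}), then invoke that an irreducible Markov chain on a finite state space is recurrent and hence visits every state. The paper's argument is terser and does not spell out your middle step of constructing, for each $\M$, a population capable of hosting all of its features; your identified ``main obstacle'' is really the content of the irreducibility hypothesis itself, which the paper treats as an assumption and addresses only in a remark \emph{after} the proof rather than within it.
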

The proof of this result is given in~\citeA{hubin2018novel}. The essential idea is that both the sets of search spaces and the models visited within search spaces are Markov chains.
The combinations of the different transformations and transitions between search spaces of GMJMCMC fulfill the requirement on irreducibility, which guarantees asymptotic exploration of the full model space of BGNLM as $\MS$ is finite and countable.

\paragraph{Remark} The result of Theorem \ref{th:GMJMCMC} relies on the exact calculation of the marginal likelihood $p(\bm y|\M)$. Apart from the linear model, the calculation of $p(\bm y|\M)$ is typically based on an approximation, giving similar approximations to the model probabilities. How precise these approximations are will depend on the type of method used. The current implementation includes Laplace approximations, integrated Laplace approximations, and integrated nested Laplace approximations. In principle, other methods based on MCMC outputs \cite{chib1995marginal,chib2001marginal} could  be incorporated relatively easily. However, these methods result in longer runtimes. In any case, all models will be visited in the end if the chain is run for long enough.

\paragraph{Parallelization strategy}
Due to our interest in quickly exploring as many \emph{unique}  high-quality models as possible, it is beneficial to run multiple chains in parallel. The process can be embarrassingly parallelized into $B$ chains. If one is mainly interested in model probabilities, then equation~\eqref{approxpost} can be directly applied with $\MS^*$ being the set of unique models visited in all runs. A  more memory efficient alternative is to utilize the following posterior estimates based on weighted sums over individual runs:\
\begin{equation}\label{weighted_sum}
\hat{p}(\Delta | \bm y) = \sum_{b=1}^B u_b\hat{p}_b(\Delta | \bm y).
\end{equation}
Here $\{u_b\}$ is a set of arbitrary normalized weights and $\hat{p}_b(\Delta |\bm y)$ are the posteriors obtained with equation~\eqref{approxpost} from run $b$ of GMJMCMC. Due to the irreducibility of the GMJMCMC procedure, it holds that $\lim_{T\rightarrow \infty}\hat{p}(\Delta| \bm y) = p(\Delta| \bm y)$ where $T$ is the number of iterations within each run. Thus, for any set of normalized weights, the approximation $\hat{p}(\Delta| \bm y)$ converges to the true posterior probability ${p}(\Delta| \bm y)$ and one can, for example, use  $u_b = 1/B$. Uniform weights, however, have the disadvantage of potentially giving too much weight to posterior estimates from  chains that have poorly explored the model space. In the following heuristic improvement, $u_b$ is chosen to be proportional to the posterior mass detected by run $b$, 
\begin{align*}
u_b=&\frac{\sum_{\M \in {\MS_{b}^{*}}} \hat p(\bm y| \M) p(\M)}{\sum_{b'=1}^B\sum_{\M' \in {\MS_{b'}^{*}} } \hat p(\bm y| \M') p(\M')}\; .
\end{align*}
This choice  indirectly penalizes chains that cover smaller portions of the model space. When estimating posterior probabilities using these weights, we only need  to store the following quantities for each run: $\hat{p}_b(\Delta| \bm y)$ for all statistics $\Delta$ of interest and $s_b = \sum_{\M' \in {\MS_b^{*}}} \hat p(\bm y|\M') p(\M')$. There is no further need for data transfer between processes. A proof that this choice of weights gives consistent estimates of posterior probabilities is given in the supplementary materials \url{https://projecteuclid.org/journals/supplementalcontent/10.1214/18-BA1141/suppdf_1.pdf} to \citeA{hubin2018novel}.

\section{Extensions of BGNLM}\label{extensions}

Two extensions of the standard BGNLM are presented in this section. We first show how the model can be extended to include latent Gaussian variables, which can be used, for example, to model spatial and temporal correlations and/or over-dispersion. Then, a fully Bayesian extension of the model is presented, where the internal parameters $\bm\alpha_j$ of the features are treated properly as parameters of the BGNLM model.

\subsection{Bayesian Generalized Nonlinear Mixed Models}\label{sec:latent}

So far, the BGNLM model has been developed under the assumption that all observations are conditionally independent. An important 
extension is to include latent variables, both to account for correlation structures and over-dispersion. This is achieved by replacing~\eqref{DeepModel}
with
\begin{align}
   \mathsf{h}(\mu) = & \beta_0  + \sum_{j=1}^{q} \gamma_{j}\beta_{j}F_{j}(\bm{x},\bm\alpha_j)+\sum_{k=1}^{r} \lambda_{k}\delta_{k}(\bm x),\label{DeepModel2}
%\text{ where }
%\bm{\delta_k} = (\delta_{1k},...,\delta_{nk}) \sim N_{n}\left(\bm{0}, \bm\Sigma_k\right).
\end{align}
where, for each $k$, $\{\delta_k(\bm x)\}$ is a zero-mean Gaussian process with covariance function $\Sigma_k(\bm x,\bm x')$.
The resulting Bayesian generalized nonlinear mixed model (BGNLMM) includes $q + r$ possible components, where $\lambda_k$ indicates whether the corresponding latent variable is to be included in the model.
The latent Gaussian variables  allow the description of different correlation structures between individual observations (e.g. autoregressive models). The covariance functions typically depend only on a few parameters, so that in practice one has $\bm{\Sigma}_k(\bm x,\bm x')=\bm{\Sigma}_k(\bm x,\bm x';\bm\psi_k)$. 

The model vector now becomes $\M= (\bm{\gamma},\bm{\lambda})$, where $\bm{\lambda}=(\lambda_1,...,\lambda_r)$.
Similarly to the restriction on the number of features that can be included in a model, we introduce an upper limit of $R$ on the number of latent variables that can be included.
The total number of models with non-zero prior probability will then be $\sum_{k=1}^{Q}\binom{q}{k} \times \sum_{l=1}^{R}\binom{r}{l}$.
The corresponding prior for the model structures is defined by
\begin{align}
p(\M)\propto\ &
\prod_{j=1}^q a^{\gamma_jc(F_j(\bm x,\bm\alpha_j))}
\prod_{k=1}^r  b^{\lambda_kv(\delta_k)}.\label{eq:modelprior2}
\end{align}
Here, the function  $v(\delta_k)\ge 0$ is a measure for the complexity of the latent variable $\delta_k$,   which is assumed to be a non-decreasing function of the number of hyperparameters defining the distribution of the latent variable. In the current implementation, we simply use the number of hyperparameters. Parameter $b$ is specified in a similar manner as  $a$.
The prior is further extended to include
\begin{align}
\bm{\psi}_k\sim&\pi_k(\bm{\psi}_k), &&\text{for each $k$ with $\lambda_k=1$}.\label{latentprior}
\end{align}
In this case, more sophisticated methods are needed to approximate the marginal likelihoods. We use the integrated nested Laplace approximation (INLA) \cite{rue2009eINLA}, but
alternative MCMC based
methods like Chib's or Chib and Jeliazkov's method \cite{chib1995marginal,chib2001marginal} are also possible. Some comparisons of these methods are presented by~\citeA{Friel2012} and~\citeA{HubinStorvikINLA}, who demonstrate that INLA performs quite well for such models.

\subsection{Fully Bayesian BGNLM}\label{sec:fullb}

For a fully Bayesian approach, the $\bm{\alpha}_j$ parameters involved in $F_j(\bm x,\bm\alpha_j)$ also have to be considered as parameters of the model. Hence, we must specify additional priors for all $\bm\alpha_j$ for which $\gamma_j = 1$. More specifically, we set
\begin{align}
\bm{\beta},\bm{\alpha},\phi|\M\sim&p(\bm{\beta}|\M,\phi)p(\bm{\alpha}|\M,\phi)p(\phi|\M).\label{eq:prior.par.full}
\end{align}
In practice, for $p(\bm{\alpha}|\M,\phi)$ we use a simple independent Gaussian prior for each component $N(0,\sigma^2_\alpha)$.
The variance $\sigma^2_\alpha$ can either be fixed or treated as a random variable. 
For computational reasons, our implementation treats $\sigma^2_\alpha$ as a constant. 

In contrast to the three  strategies described in Section~\ref{sec:feature}, the feature generation is now fully integrated into the MCMC algorithm.
When a new projection feature $F_j(\bm x,\bm \alpha_j)$ is generated, both outer parameters $\bm\alpha^{out}_j$ and the nested parameters $\bm\alpha^{in}_j$ are drawn from the prior distributions. There are no restrictions on the nonlinear transformations $g_j(\cdot)$, though  the link function $h(\cdot)$ needs to be differentiable.

This strategy is flexible and formally the most appropriate from a Bayesian perspective.  The joint space of models and parameters is, at least in principle, systematically explored, but this is computationally extremely demanding in practice. 
%The probability of getting close to the posterior modes in the joint space of models and parameters is extremely low and there is the danger of  completely missing the mode.  
The joint space of models and parameters now becomes even more complex with many potential local optima of the posterior distribution.
Hence convergence typically requires a huge number of iterations.  This might be improved by drawing around the modes obtained by the previously suggested strategies, but developing and implementing this idea is a topic of further research. The current implementation of the fully Bayesian approach is only of practical use in very low-dimensional settings, but it is also an important methodological contribution which points towards the future potential of BGNLM.

The marginal likelihood under the fully Bayesian strategy becomes significantly more complicated:
\begin{align} \label{MarginalFull}
{p}(\bm y|\M)
%= & \int_{{\Theta}_\M}p(\bm y|\bm{\theta},\M)p(\bm{\theta}|\M)d\bm{\theta}
= \int_{{\mathfrak{A}}_\M}\int_{{\Theta}_\M}p(\bm y|\bm{\theta},\bm\alpha,\M)p(\bm{\theta}|\bm\alpha,\M)p(\bm{\alpha}|\M)d\bm{\theta}d\bm{\alpha},\
\end{align}
where $\mathfrak{A}_\M$ is the parameter space for $\bm\alpha$ under model $\M$. It can be approximated using sampling via
\begin{align} \label{MarginalPosterior2}
\widehat{p}(\bm y|\M)
= \tfrac{1}{M}\sum_{t = 1}^{M}\int_{{\Theta}_\M}p(\bm y|\bm{\theta},\bm\alpha^{(t)},\M)p(\bm{\theta}|\bm\alpha^{(t)},\M)d\bm{\theta},
\end{align}
where $\bm\alpha^{(t)}, t \in \{1,...,M\}$ are drawn from the prior $p(\bm{\alpha}|\M)$. Given $\bm\alpha$, the integrals with respect to $\bm\theta$ can be calculated/approximated just as discussed previously in 
Section~\ref{sec:marg}.

\section{{Applications}}\label{section4}
In this section, we present two examples concerned with binary classification (breast cancer and spam classification), one example for prediction of a metric outcome (age prediction of abalones), and another example which focuses on obtaining interpretable models. The examples of interpretable model inference include  recovering Kepler's law from raw data, where we compare the results from BGNLM with symbolic regression, and an application of BGNLMM extension to epigenetic data. For the classification and regression tasks, the  performance of  BGNLM is compared with nine competing methods. 

Appendix~\ref{ap:furtherexamp} includes another example  concerned with asteroid classification, as well as two more examples on model inference.  The first inference example in Appendix~\ref{ap:furtherexamp} is  concerned with recovering a physical law about  planetary mass from raw data. The second one is based on simulated data from a logic regression model, where a comparison to inference using the original GMJMCMC algorithm for Bayesian logic regression \cite{hubin2018novel} is provided. 

When we refer to BGNLM in this section, we formally mean the combination of the model and the corresponding GMJMCMC algorithm for fitting the model.
In addition to the standard algorithm, a parallel version was applied\footnote{For the prediction tasks the parallel version is based on $B=32$ threads.} (denoted BGNLM{\_}PRL). %
A BGNLM with maximal depth $D=0$ is also included, which corresponds to a 
Bayesian (generalized) \emph{linear} model  using only the original covariates (denoted BGLM).
The corresponding R libraries, functions, and their tuning parameter settings are described in supplementary scripts.
Only results obtained with strategy 1 of the feature generating process are reported here. In all of the examples, $P_{\mathcal{G}}$ is uniform and the population size of GMJMCMC coincides with $Q$ and $L$. Comparisons between the different strategies for the classification examples are reported in Appendix~\ref{ap:add.ex1_3}. These comparisons demonstrate that the four different strategies behave similarly.

\subsection{Binary Classification}

\newcounter{Example} 

The performance of BGNLM is compared with the following competitive algorithms:  tree-based (TXGBOOST) and linear (LXGBOOST) gradient boosting machines, penalized likelihood (LASSO and RIDGE), deep dense neural networks with multiple hidden fully connected layers (DEEPNETS), random forest (RFOREST), na{\i}ve Bayes (NBAYES), and simple \textit{frequentist} logistic regression (LR). 
For algorithms with a stochastic component, $N=100$ runs were performed on the training data set. The test set was analyzed with each of the obtained prediction methods, where the split between training and test samples was kept fixed. We report the median as well as the minimum and maximum of the evaluation measures across those runs. For deterministic algorithms, only one run was performed. We do not report computational times since all methods are implemented using different technologies and back-end solutions, which can significantly influence the actual run time. At the same time, theoretic computational complexities of the addressed methods for model building (training phase) and prediction (test phase) are reported in Table~\ref{tab:comptime} in Appendix~\ref{ap:furtherexamp}.

We consider a BGNLM model~\eqref{themodeleq} with Bernoulli distributed observations and a logit link function.
%logistic nonlinear Bayesian regression with conditionally independent observations, which is a special case of BGNLM, namely:
%\begin{subequations}\label{classification_modeleq}
%\begin{align} 
%  Y_i|\mu_i \sim& \ \text{Bernoulli}(\mu_i), \\
%   \text{logit}(\mu_i)  =& \ \beta_0  + \sum\limits_{j=1}^{q} \gamma_{j}\beta_{j}F_{j}(\bm{x}_{i},\bm\alpha_j).
%\end{align}
%\end{subequations}
The Bayesian model
uses the model structure prior~\eqref{eq:modelprior}  with $a=e^{-2}$. The set of nonlinear transformations is defined as  $\mathcal{G}=\{\text{gauss}(x),\text{tanh}(x)$,
$\text{atan}(x),\text{sin}(x)\}$, with $\text{gauss}(x) = e^{-x^2}$ and a uniform distribution for the selection probabilities on the functions $P_\mathcal{G}$.
The logistic regression model has a known dispersion parameter $\phi=1$ and for computational convenience the Bayesian model is completed by using Jeffreys prior on $\bm\beta$:
%Thus the prior for the regression parameters is
\begin{align*}
%  p(\bm\gamma)\propto &  \prod_{j=1}^{q} \exp(-\gamma_j2c(F_j(\bm x)))\label{gammodel2}\\
 p(\bm{\beta}|{\M})=&|J_n(\bm{\beta}|{\M})|^{\frac{1}{2}}\;,
 \end{align*}
where 
$|J_n(\bm{\beta}|{\M})|$ is the determinant of the  Fisher information matrix for model $\M$. The resulting posterior corresponds to performing model selection with a criterion whose penalty on the complexity is similar to a modified AIC criterion~\cite{bogdan2020identifying}.%, which is known (at least for the linear model) to be asymptotically (as the sample size $n$ is going to infinity) optimal in terms of prediction~\cite{BA_2002}.

Predictions based on the BGNLM are made according to
$$
\hat{y}_i^* = \text{I}\left(\hat p(Y_i^*=1|\bm y)\ge \eta \right),
$$
where we have used the notation $Y_i^*$ for a response variable in the test set. Furthermore,
$$
\hat p(Y_i^*=1|\bm y)=\sum_{\M \in \MS^*} \hat p(Y_i^*=1|\M,\bm y) \hat p(\M|\bm y)
$$
with $\MS^*$ denoting the set of all explored models and 
$$
\hat p(Y_i^*=1|\M,\bm y)= p(Y_i^*=1|\M,\widehat{\bm{\beta}}^{\M},\bm y),
$$
where $\widehat{\bm{\beta}}^{\M}$ is the
posterior mode in $p(\bm{\beta}|\M,\bm y)$. This results in a model averaging approach for prediction. For binary classification, we use the most common threshold $\eta  = 0.5$. Calculation of marginal likelihoods
is performed by applying the Laplace approximation. 

To evaluate the predictive performance of algorithms, we report the accuracy of predictions (ACC), false positive rate (FPR) and false negative rate (FNR). These are defined as follows:
\begin{align*}
\text{ACC} = \frac{\sum_{i=1}^{n_p}\text{I}(\hat y_i^*=y_i^*) }{n_p};\text{ }
&\text{FPR} = \frac{\sum_{i=1}^{n_p} \text{I}\left(y_i^*=0,\hat y_i^*=1\right)}{\sum_{i=1}^{n_p}  \text{I}\left(y_i^* = 0\right)};\text{ }\\
&\text{FNR} = \frac{\sum_{i=1}^{n_p}  \text{I}\left(y_i^*=1,\hat y_i^*=0\right)}{\sum_{i=1}^{n_p} \text{I}\left(y_i^*=1\right)}.
\end{align*}
Here, $n_p$ is the size of the test data sample. %Corresponding results for the two classification examples are presented in Table \ref{t2} and Table \ref{t3}, respectively\todo{Geir: This sentence seems a bit misplaced here}. 

\refstepcounter{Example} \label{Ex:BreastCancer}
\subsubsection{Example \arabic{Example}: Breast cancer classification}

This example consists of breast cancer data with observations from 357 benign and 212 malignant tissues~\cite{breastdata}.
Data are obtained from digitized images of fine-needle aspirates of breast mass and can be downloaded from \url{https://archive.ics.uci.edu/ml/datasets/Breast+Cancer+Wisconsin+(Diagnostic)}.
Ten real-valued characteristics are considered for each cell nucleus: \emph{radius, texture, perimeter, area, smoothness, compactness, concavity, concave points, symmetry} and \emph{fractal dimension}. For each characteristic, the mean, standard error, and the mean of the three largest values per image were computed, resulting in 30 input variables per image. See~\citeA{breastdata} for more details. A randomly selected quarter of the images was used as a training data set, the remaining images were used as a test set. In this example, we used $D=6, L=20$ and $Q = 20$ for BGNLM.

\begin{table}[t]{
\resizebox{\textwidth}{!}{%
\begin{tabular}{llll}%
\hline 
Algorithm&ACC&FNR&FPR\\\hline
BGNLM{\_}PRL&0.9742 (0.9695,0.9812)&0.0479 (0.0479,0.0536)&0.0111 (0.0000,0.0184)\\
RIDGE&{0.9742} (-,-)&0.0592 (-,-)&{0.0037} (-,-)\\
BGLM&0.9718 (0.9648,0.9765)&0.0592 (0.0536,0.0702)&0.0074 (0.0000,0.0148)\\
BGNLM&0.9695 (0.9554,0.9789)&0.0536 (0.0479,0.0809)&0.0148 (0.0037,0.0326)\\
DEEPNETS&0.9695 (0.9225,0.9789)&0.0674 ({0.0305},0.1167)&0.0074 (0.0000,0.0949)\\
LR&0.9671 (-,-)&0.0479 (-,-)&0.0220 (-,-)\\
LASSO&0.9577 (-,-)&0.0756 (-,-)&0.0184 (-,-)\\
LXGBOOST&0.9554 (0.9554,0.9554)&0.0809 (0.0809,0.0809)&0.0184 (0.0184,0.0184)\\
TXGBOOST&0.9531 (0.9484,0.9601)&0.0647 (0.0536,0.0756)&0.0326 (0.0291,0.0361)\\
RFOREST&0.9343 (0.9038,0.9624)&0.0914 (0.0422,0.1675)&0.0361 (0.0000,0.1010)\\
NBAYES&0.9272 (-,-)&{0.0305} (-,-)&0.0887 (-,-)\\ \hline
\end{tabular}}
}
\caption{\label{t2}Comparison of performance  (ACC, FPR, FNR) of different algorithms for breast cancer data (Example \ref{Ex:BreastCancer}).  For methods with random outcomes, the median measures (with minimum and maximum in parentheses) are displayed.
The algorithms are sorted according to median accuracy.}   
\end{table}

Qualitatively, the results from Table~\ref{t2} show that the na{\i}ve Bayes classifier and random forests have the worst performance. NBAYES gives too many false positives and RFOREST too many false negatives. Tree-based boosting only marginally outperforms the random forest. All of the algorithms based on linear features are among the best performing methods, indicating that nonlinearities are not of primary importance in this data set. Nevertheless, both parallel and single-threaded versions of the GMJMCMC algorithm for BGNLM, and  also DEEPNETS, are among the best performing algorithms. BGNLM run on 32 parallel threads gives the highest median accuracy and performs substantially better than BGNLM based on only one chain.  

\refstepcounter{Example} \label{Ex:Spam}
\subsubsection{Example \arabic{Example}: Spam classification}
The second classification task uses  data from \citeA{cranor1998spam} for detecting spam emails, which can be downloaded from \url{https://archive.ics.uci.edu/ml/datasets/spambase}. The concept of ‘spam’ is extremely diverse and includes advertisements for products and websites, money-making schemes, chain letters, the spread of unethical photos and videos, etc. In this data set, the collection of spam emails consists of  messages which have been actively marked as spam by users, whereas non-spam emails consist of messages filed as work-related or personal. 
The data set includes 4601 emails, with 1813 labeled as spam. For each email, 58 characteristics are listed which can serve as explanatory input  variables. These include 57 continuous and 1 nominal variable, where most of these are concerned with the frequency of particular words or characters.  Three variables provide different measurements on the sequence length of consecutive capital letters.  The data were randomly divided into a training set of 1536 emails and a test set of the remaining 3065 emails.
The model, settings, and performance measures are the same as for the previous example, except for the hyper-parameters $Q$ and $L$ and the population size of the GMJMCMC algorithm, which are all set to 100 due to the significantly increased number of input covariates to be considered.

\begin{table}[!ht]{
\resizebox{\textwidth}{!}{%
\begin{tabular}{llll}%
\hline 
Algorithm&ACC&FNR&FPR\\\hline
TXGBOOST&0.9465 (0.9442,0.9481)&0.0783 (0.0745,0.0821)&0.0320 (0.0294,0.0350)\\
RFOREST&0.9328 (0.9210,0.9413)& 0.0814 (0.0573,0.1174)&0.0484 (0.0299,0.0825)\\
DEEPNETS&0.9292 (0.9002,0.9357)& 0.0846 (0.0573,0.1465)&0.0531 (0.0310,0.0829)\\
BGNLM{\_}PRL&0.9251 (0.9139,0.9377)&0.0897 (0.0766,0.1024)&0.0552 (0.0445,0.0639)\\
BGNLM&0.9243 (0.9113,0.9328)&0.0927 (0.0808,0.1116)&0.0552 (0.0465,0.0658)\\
LR&0.9194 (-,-)&0.0681 (-,-)&0.0788 (-,-)\\
BGLM&0.9178 (0.9168,0.9188)&0.1090 (0.1064,0.1103)&0.0528 (0.0523,0.0538)\\
LASSO&0.9171 (-,-)& 0.1077 (-,-)&0.0548  (-,-)\\
RIDGE&0.9152 (-,-)&0.1288 (-,-)& 0.0415 (-,-)\\
LXGBOOST&0.9139 (0.9139,0.9139)&0.1083 (0.1083,0.1083)&0.0591 (0.0591,0.0591)\\
NBAYES&0.7811 (-,-)&0.0801 (-,-)&0.2342 (-,-)\\
\hline
\end{tabular}}
}
\caption{\label{t3} Comparison of performance (ACC, FPR, FNR) of different algorithms for spam data (Example \ref{Ex:Spam}). See caption of Table~\ref{t2} for details.}
\end{table}

 Table~\ref{t3} reports the results for the different methods. Once again, the na{\i}ve Bayes classifier performed worst. Apart from that, the order of performance of the algorithms is quite different from the previous example. The tree-based algorithms show the highest accuracy whereas the five algorithms based on linear features have lower accuracy. This indicates that nonlinear features are important to discriminate between spam and non-spam emails in this data set. As a consequence, BGNLM performs significantly better than BGLM. Specifically, the parallel version of BGNLM provides almost the same accuracy as DEEPNETS, with the minimum accuracy over 100 runs being actually larger, whereas the median and maximum accuracy are quite comparable. Tree-based gradient boosting and random forests, however, perform substantially better, mainly since they can optimize cutoff points for the continuous variables. One way to potentially improve the performance of BGNLM would be to include multiple characteristic functions in $\cal{G}$, such as  $\text{I}(x>\mu_{x}), \text{I}(x<F^{-1}_{0.25}(x)), \text{I}(x>F^{-1}_{0.75}(x))$. This would allow the generation of features with splitting points like in random trees.

\subsubsection{Complexity of features used for binary classification}
One can conclude from these two classification examples that BGNLM has good predictive performance both when nonlinear patterns are present (Example~\ref{Ex:Spam}) or when they are not (Example~\ref{Ex:BreastCancer}).  Similar results hold for an additional classification example presented in Appendix~\ref{Ex:NeoAsteroids} where nonlinearities play no role and yet BGNLM appears on the mark. Additionally, BGNLM has the advantage that its generated features are highly interpretable.  Excel sheets are provided as supplementary material \cite{supt} and present all features detected by BGNLM with a posterior probability larger than 0.1. Table \ref{Tab:complexity} provides the corresponding frequency distribution of the complexity (operations counts) of these features.

\begin{table}[tb]
\resizebox{\textwidth}{!}{%
\begin{tabular}{cc}
\begin{tabular}{lrrr}%
\multicolumn{4}{l}{\underline{\textbf{Example 1}: Breast cancer}}\\
compl.&BGNLM&BGNLM{\_}PRL&BGLM\\
1&11.30&14.20&29.83\\
2&3.09&0.04&0.00\\
3&0.30&0.00&0.00\\
4&0.00&0.00&0.00\\
5&0.00&0.00&0.00\\
6&0.00&0.00&0.00\\
7&0.00&0.00&0.00\\
8&0.00&0.00&0.00\\
9&0.00&0.00&0.00\\
$\geq$10&0.00&0.00&0.00\\
\hline
Total&14.42&14.24&29.83\\
\hline%[1mm]
\end{tabular}&
\begin{tabular}{lrrr}%
\multicolumn{4}{l}{\underline{\textbf{Example 2}: Spam mail}}\\
%\hline&&&
compl.&BGNLM&BGNLM{\_}PRL&BGLM\\
1&36.34&39.87&49.83\\
2&14.45&21.47&0.00\\
3&2.83&4.24&0.00\\
4&0.69&1.36&0.00\\
5&1.15&1.56&0.00\\
6&0.92&1.24&0.00\\
7&0.37&0.57&0.00\\
8&0.25&0.33&0.00\\
9&0.04&0.16&0.00\\
$\geq$10&0.15&0.11&0.00\\
\hline
Total&57.19&71.910&49.83\\\hline
\end{tabular}
\end{tabular}
}
\caption{\label{Tab:complexity}Mean frequency distribution of feature complexities detected by the different BGNLM algorithms in 100 simulation runs for  Examples \ref{Ex:BreastCancer} and  \ref{Ex:Spam}. The final row for each example gives the mean of the total number of features in 100 simulation runs which had a posterior probability larger than 0.1.}
\end{table}

In Example \ref{Ex:BreastCancer}, the parallel version of BGNLM reported a substantially smaller number of nonlinearities than the single-threaded version.  No projections were detected, while multiplications were more often detected than modifications. Interestingly, the nonlinear features reported by the parallel versions of BGNLM consisted  only of the following two multiplications:  (standard error of the area) $\times$ (worst texture) reported 3 times by BGNLM{\_}PRL and (worst texture) $\times$ (worst concave points) reported once by BGNLM{\_}PRL. While BGLM almost always included all 30 variables in the model (in 100 simulation runs only 17 out of 3\,000 possible linear features had posterior probability smaller than 0.1), BGNLM delivered more parsimonious models.

In Example \ref{Ex:Spam}, there is much more evidence for nonlinear structures. The nonlinear features with the  highest detection frequency over simulation runs in this example were always modifications. 
For BGNLM{\_}PRL, the four modifications sin($X_{7}$), gauss($X_{36}$), atan($X_{52}$), and tanh($X_{52}$) were among the top-ranking nonlinear features. Although modifications were most important in terms of replicability over simulation runs, BGNLM also found many multiplications and projections. 
From the 3204 nonlinear features reported in 100 runs by BGNLM{\_}PRL, there were 
more than 998 which included one multiplication, 116 with two multiplications, and even 3 features with three multiplications. Furthermore, there were 353 features including one projection, 12 features with two nested projections, and even 3 features where three projections were nested. These highly complex features typically occurred only in one or two simulation runs. Despite the good performance of the parallel versions of the algorithm, it seems that even more parallel threads or longer chains might be necessary to get consistent results over simulation runs in this example.

\subsection{Prediction of Metric Outcome}

For the prediction of a metric outcome, we consider a BGNLM model~\eqref{themodeleq} with a Gaussian distribution and identity link.
The set of nonlinear transformations is now $\mathcal{G} = \{\text{sigmoid}(x)$, $\exp(-|x|)$, $\log(|x|+1)$, $|x|^{1/3}$, $|x|^{5/2},|x|^{7/2}\}$ where selection probabilities $P_\mathcal{G}$ are again uniform. Furthermore, the restrictions $D = 6$ for the depth, $L=15$ for the local width and $Q=15$ for the maximum number of features per model are applied.
We present results both for $a=e^{-2}$ and $a=e^{- \log n}$ in the prior on model structures, giving AIC and BIC-like penalties for the model complexity, respectively. The
parameter priors are specified as  
\begin{align}
 p(\sigma^2) =& \sigma^{-2} \quad \text{   and   } \quad  p(\bm{\beta}|\M,\sigma^2)=|J_n(\bm{\beta}|\M,\sigma^2)|^{\frac{1}{2}},\;\label{JefPriorNormal}
\end{align}
where $|J_n(\bm{\beta}|\M,\sigma^2)|$ is the determinant of the corresponding Fisher information matrix. Hence, the prior for the coefficients \eqref{JefPriorNormal} becomes Jeffreys prior. 
In this case, marginal likelihoods conditional on fixed values of $\bm{\alpha}$ can be computed exactly.

Here, we compare BGNLM, BGNLM{\_}PRL, and BGLM with almost the same set of competing algorithms as before in the classification examples, with two exceptions. Since the na{\i}ve Bayes classifier and logistic regression are not suitable for predicting a metric outcome, we considered instead VARBAYES, which refers to Bayesian linear regression fitted with variational Bayes \cite{carbonetto2012scalable}, and Gaussian regression (GR), which refers to simple frequentist linear regression \cite{McCullagh-Nelder-1989}.

The performance of the different methods was compared according to the root mean squared error (RMSE), the mean absolute error (MAE), and the Pearson correlation coefficient between the observed data and their predictors (CORR) which is the same as the
square root of the coefficient of determination. These measures are   defined as follows:
\begin{eqnarray*}
\text{RMSE} &= &  \sqrt{\frac{\sum_{i=1}^{n_p}(\hat y_i^*-y_i^*)^2}{n_p}}; \quad  \text{MAE} = {\frac{\sum_{i=1}^{n_p}|\hat y_i^*-y_i^*|}{n_p}}; \\ \text{CORR}&=&{\frac{\sum_{i=1}^{n_p}(\hat y_i^*-\Bar{\hat y}^*)( y_i^*-\Bar{y}^*)}{\sqrt{\sum_{i=1}^{n_p}(\hat y_i^*-\Bar{\hat y}^*)^2}\sqrt{\sum_{i=1}^{n_p}( y_i^*-\Bar{y}^*)^2}}}\;.
\end{eqnarray*}
 
Like in the case of binary classifications, for
algorithms with a stochastic component we performed $N=100$ runs on the training data set and then analyzed the test set with each of the obtained models. %Results for all algorithms are presented in Table \ref{tAbalone}\todo{Geir: This should probably be moved within 5.2.1}.

\refstepcounter{Example} \label{Ex:ShellAge}
\subsubsection{Example \arabic{Example}: Abalone shell age prediction}

The Abalone data set~\cite{nash1994population}, downloaded from \url{https://archive.ics.uci.edu/ml/datasets/Abalone}, has served as a benchmark data set for prediction algorithms for more than two decades. The aim is to predict the age of abalone from physical measurements. The input variables used are
{\it Sex} (categorical, Male/ Female/Infant), {\it Length} (continuous, longest shell measurement),
{\it Diameter} (continuous, perpendicular to length), 
{\it Height}  (continuous, with meat in shell),
{\it Whole weight} (continuous, whole abalone), 
{\it Shucked weight}  (continuous, weight of meat),
{\it Viscera weight} (continuous, gut weight, after bleeding) and
{\it Shell weight} (continuous, after being dried).
All measurements of these variables are in mm or grams.
The outcome variable, age in years, is obtained by adding 1.5 to the number of rings. The counting of rings is a tedious and time-consuming task and therefore there is some interest in predicting the age from the other measurements which are easier to obtain. For this data set, a total of 4\,177 observations are present, of which 3\,177 randomly chosen observations were used for training and the remaining 1000 observations were used for testing for all of the compared approaches.
\begin{table}[!ht]{
\resizebox{\textwidth}{!}{%
\begin{tabular}{llll}%
\hline 
Algorithm&RMSE&MAE&CORR\\\hline
BGNLM{\_}PRL (BIC)&1.9573 (1.9334,1.9903)&1.4467 (1.4221,1.4750)&0.7831 (0.7740,0.7895)\\
BGNLM (BIC)&1.9690 (1.9380,2.0452)&1.4552 (1.4319,1.5016)&0.7803 (0.7616,0.7882)\\
BGNLM{\_}PRL (AIC)&1.9720 (1.9328,2.0081)&1.4548 (1.4377,1.4903)&0.7795 (0.7693,0.7893)\\
BGNLM (AIC)&2.0046 (1.9573,2.0560)&1.4821 (1.4471,1.5209)&0.7707 (0.7566,0.7831)\\
RFOREST&2.0352 (2.0020,2.0757)&1.4924 (1.4650,1.5259)&0.7633 (0.7530,0.7712)\\
BGLM&2.0758 (-,-)&1.5381 (-,-)&0.7522 (-,-)\\
LASSO&2.0765 (-,-)&1.5386 (-,-)&0.7514 (-,-)\\
%BGLM (AIC)&2.0775 (-,-)&1.5398 (-,-)&0.7517 (-,-)\\
VARBAYES&2.0779 (-,-)&1.5401 (-,-)&0.7516 (-,-)\\
GR&2.0801 (-,-)&1.5401 (-,-)&0.7500 (-,-)\\
LXGBOOST&2.0880 (2.0879,2.0880)&1.5429 (1.5429,1.5429)&0.7479 (0.7479,0.7479)\\
TXGBOOST&2.0881 (2.0623,2.1117)&1.5236 (1.4981,1.5438)&0.7526 (0.7461,0.7590)\\
RIDGE&2.1340 (-,-)&1.5649 (-,-)&0.7347 (-,-)\\
DEEPNETS&2.1466 (1.9820,3.5107)&1.5418 (1.3812,3.1872)&0.7616 (0.6925,0.7856)\\\hline
\end{tabular}}
} 
\caption{\label{tAbalone}Comparison of performance  (RMSE, MAE, CORR) of different algorithms for abalone shell data  (Example \ref{Ex:ShellAge}).  For methods with random outcomes, the median measures (with minimum and maximum in parentheses) are displayed.
The methods are sorted according to median RMSE.}   
\end{table}

A more detailed description of the data set is given in~\citeA{waugh1995extending}. In the original data set, the categorical variable {\it Sex} had a fourth level called Trematode. This referred to shells being castrated due to Trematode infection, but there were only relatively few subjects of that type and hence they were removed. Interestingly, infant abalones are not necessarily younger than male or female abalones, which makes the prediction task more difficult.
Another challenge is the multicollinearity issue due to large correlations between all weight measurements and measures of length.

\begin{table}[!ht]{
\centering{
\begin{tabular}{ll|ll}%
Frequency&Linear&Frequency&Nonlinear\\
\hline 
99	& Female         & 43	& exp(ShuckedWeight) \\
99	& VisceraWeight & 26	& sigmoid(WholeWeight) \\
96	& Male           & 26	& exp(ShuckedWeight) * WholeWeight \\
86	& ShellWeight   & 16 &	Male * ShuckedWeight \\
86	& ShuckedWeight & 15 &	ShuckedWeight * ShuckedWeight \\
80	& Height         & 15 &	$\mbox{Height}^{1/3}$ \\
67	& Whole Weight   & 15 &	Female * Height \\
56	& Diameter       & 14 &	log(WholeWeight) \\
23	& Length         & 14 &	ShuckedWeight * WholeWeight \\
&& 14	&Female * WholeWeight \\
&& 13 &	log(ShuckedWeight) \\
&& 13 & Female * ShellWeight \\
\hline
\end{tabular}}
} 
\caption{\label{tAbalone2} Frequency of selection of features in 100 simulation runs of the nine input features and all nonlinear features which were detected in more than 10 runs for abalone shell age data  (Example \ref{Ex:ShellAge}). The frequency is the number of simulations that include the given feature, linear features are listed on the left, nonlinear - on the right.}   
\end{table}

The results presented in Table \ref{tAbalone} indicate that  BGNLMs with both $a=e^{-2}$ (AIC) and $a=e^{-\log n}$  (BIC) outperform all other algorithms in terms of prediction accuracy, and the parallel versions  (BGNLM\_PRL) perform even better. The worst run of BGNLM\_PRL (BIC) (with RMSE = 1.99) is still better than the best run of random forests (RMSE = 2.00) which was the third-best algorithm. Interestingly the performance of DEEPNETS was very unstable between repeated runs. It had the largest variation of RMSE, occasionally giving really good prediction results (though slightly worse than BGNLM) but in most cases worse than all the competing algorithms. 

%\subsubsection{Complexity of features used for prediction}

Table~\ref{tAbalone2} provides information about the features that were most often detected by BGNLM\_PRL (i.e. had a posterior probability above 0.5) in  100 simulation runs.  The two dummy variables for {\it Sex} and the {\it Viscera Weight} were almost always selected. Apart from {\it Length}, all the other input features were selected in more than half of the simulation runs. The most important nonlinear feature was $\exp(ShuckedWeight)$ which was detected in 43 runs, followed by sigmoid($WholeWeight$) and $\exp(ShuckedWeight) * WholeWeight$. The last feature has a depth of 3, but the majority of frequently observed nonlinear features are either modifications or multiplications of input features with depth 2. Within all the simulation runs, only 6 different projections were detected, four of them only once and two of them twice.  An Excel spreadsheet is provided as supplementary material, which includes information about all detected features  with a posterior probability larger than 0.1.

\subsection{Model Inference}

Example~\ref{Ex:Kepler} and the example in Appendix~\ref{Ex:JupiterMass} 
%(presented in Appendix~\ref{ap:furtherexamp})  
are based on data sets describing physical parameters of newly discovered exoplanets. The data were originally collected and continues to be updated by Hanno Rein at the Open Exoplanet Catalogue GitHub (\url{https://github.com/OpenExoplanetCatalogue/}) repository~\cite{exocat}. The input covariates include planet and host star attributes, discovery methods, and dates of discovery.  We use a subset of $n = 223$ samples containing all planets with no missing values  to rediscover two basic physical laws which involve some nonlinearities. 
We compare the performance of BGNLM  when running different numbers of parallel threads. For the Kepler's third law, we also compare BGNLM to a symbolic regression approach~\cite{koza1994genetic}, implemented in the Python library \texttt{gplearn}.

For this example (and also for the examples in Appendix~\ref{Ex:JupiterMass} dealing with a planetary-mass law and Appendix~\ref{ap:sim.ex} dealing with simulated logic data),  we utilize the BGNLM model~\eqref{themodeleq} with conditionally independent Gaussian observations and the identity link.
%\begin{subequations}
%\begin{align}
%  Y_i|\mu_i,\sigma^2 \sim&  N(\mu_i,\sigma^2),\quad i = 1,...,n\label{exomodel0}\\ 
%  \mu_i =& \beta_0  + \sum_{j=1}^{q} \gamma_{j}\beta_{j}F_{j}(\bm{x}_{i},\bm\alpha_j)\label{exomodel1} \; .
%\end{align}
%\end{subequations}
Two different sets of nonlinear transformations, $\mathcal{G}_1 = \{\text{sigmoid}(x), \sin(x), \cos(x)$, $\tanh(x)$, $\text{atan}(x)$, $|x|^{1/3}\}$ and $\mathcal{G}_2 = \{\text{sigmoid}(x), \sin(x),\exp(-|x|),\log(|x|+1), |x|^{1/3}$, $|x|^{2.3}$, $|x|^{7/2}\}$ are considered with a uniform $P_\mathcal{G}$ in both cases. We 
 restrict the depth to $D = 5$, the local width to $L=15$ and the maximum number of features in a model to $Q=15$. $\mathcal{G}_1$ is an adaptation of the set of transformations used in the prediction examples. Adding $|x|^{1/3}$ results in a model space which includes a closed form expression of Kepler's 3rd law in Example \ref{Ex:Kepler}.
 $\mathcal{G}_2$ is a somewhat larger set where the last two functions are specifically motivated to facilitate the generation of interesting features linking the mass and luminosity of stars \cite{kuiper1938empirical,salaris2005evolution}.  
 For the prior of the model structure~\eqref{eq:modelprior}, we choose $a=e^{- \log n}$ giving a BIC like penalty for the model complexity. The
parameter priors are specified again by~\eqref{JefPriorNormal}.
%\begin{align}
% p(\sigma^2) =& \sigma^{-2}\text{   and   } p(\bm{\beta}|\M,\sigma^2)=|J_n(\bm{\beta}|\M,\sigma^2)|^{\frac{1}{2}},\;\label{JefPrior}
%\end{align}
%where $|J_n(\bm{\beta}|\M,\sigma^2)|$ is the determinant of the corresponding Fisher information matrix. Hence \eqref{JefPrior} is Jeffreys prior for the coefficients. 
In this case, marginal likelihoods conditional on fixed values of $\bm{\alpha}$ can be computed exactly.

 The focus in these examples is on correctly identifying important features. A missed feature is considered to be a more serious loss than including some extra features. Consequently, we are using a  threshold value of $0.25$ for the feature posteriors to define positive detections. To evaluate the performance of algorithms,  we  report estimates for the power (Pow), the false discovery rate (FDR), and the expected number of false positives (FP) based on $N$ simulation runs.  These measures are defined as follows:  
\begin{align*}
\text{Pow} = 
\tfrac{1}{N}\sum_{l=1}^N \text{I}(\hat\gamma^l_{j^*}=1);\quad
\text{FDR} =&\tfrac{1}{N}\sum_{l=1}^N\frac{\sum_j\text{I}(\gamma_j=0,\hat\gamma_j^l=1)}{\sum_j\text{I}(\hat\gamma_j^l=1)}; \\
\text{FP} =& \tfrac{1}{N}\sum_{l=1}^N  \sum_{j\neq j^*}\text{I}(\hat\gamma^l_j=1).
\end{align*}
Here, $\hat\gamma^l_j = \text{I}(\hat p(\gamma_j|\bm y)>0.25)$ denotes the identification of $\gamma_j$ in run $l$ of the algorithm and $j^*$ is the index of a true feature, which means a feature which is in accordance with the well known physical laws. 

\refstepcounter{Example} \label{Ex:Kepler}
\subsubsection{Example \arabic{Example}: Kepler's third law}
In this example, we want to model the semi-major axis of the orbit, \textit{SemiMajorAxisAU}, as a function of 10 potential input variables,
%: \textit{TypeFlag}, \textit{RadiusJpt}, \textit{PeriodDays}, \textit{PlanetaryMassJpt}, \textit{Eccentricity}, \textit{HostStarMassSlrMass}, \textit{HostStarRadiusSlrRad}, \textit{HostStarMetallicity}, \textit{HostStarTempK}, \textit{PlanetaryDensJpt}, 
which are described and aliased in Table~\ref{tab:alias}.

%\begin{table}[tb]
%\centering
%\begin{tabular}{lll}
%Variable&Physical alias&Full name\\\hline
%$y$&$a$     &\textit{SemiMajorAxisAU}  \\
%$x_1$&  -   & \textit{TypeFlag}\\
%$x_2$&  $R_p$& \textit{RadiusJpt}\\
%$x_3$&  $P$   & \textit{PeriodDays}\\
%$x_4$&  $m_p$   & \textit{PlanetaryMassJpt}\\
%$x_5$&  $e$   & \textit{Eccentricity}\\
%$x_6$&  $M_h$   & \textit{HostStarMassSlrMass}\\
%$x_7$&  $R_h$   & \textit{HostStarRadiusSlrRad}\\
%$x_8$&  $Fe_h$   & \textit{HostStarMetallicity}\\
%$x_9$&  $T_h$     & \textit{HostStarTempK}\\
%$x_{10}$& $\rho_p$     & \textit{PlanetaryDensJpt}\\\hline
%\end{tabular}
%\caption{\label{tab:alias} Model variable names, their physical aliases and full names.}
%\end{table}

\begin{table}[tb]
\resizebox{\textwidth}{!}{%
\begin{tabular}{llllll}
Variable&Alias&Full name&Variable&Alias&Full name\\\hline
$y$&$a$     &\textit{SemiMajorAxisAU}  \\
$x_1$&  -   & \textit{TypeFlag}&$x_2$&  $R_p$& \textit{RadiusJpt}\\
$x_3$&  $P$   & \textit{PeriodDays}&$x_4$&  $m_p$   & \textit{PlanetaryMassJpt}\\
$x_5$&  $e$   & \textit{Eccentricity}&$x_6$&  $M_h$   & \textit{HostStarMassSlrMass}\\
$x_7$&  $R_h$   & \textit{HostStarRadiusSlrRad}&$x_8$&  $Fe_h$   & \textit{HostStarMetallicity}\\
$x_9$&  $T_h$     & \textit{HostStarTempK}&$x_{10}$& $\rho_p$     & \textit{PlanetaryDensJpt}\\\hline
\end{tabular}
}
\caption{\label{tab:alias} Model variable names, their (physical) aliases, and full names used in  Examples~\ref{Ex:Kepler}~and~\ref{Ex:JupiterMass}.}
\end{table}

Kepler's third law says that the square of the orbital period $P$ of a planet is directly proportional to the cube of the semi-major axis $a$ of its orbit. Mathematically, this can be expressed as
\begin{align}
\frac{P^2}{a^3}=\frac{4\pi^2}{G(M+m)}\approx\frac{4\pi^2}{GM},\label{3kepl0}
\end{align}
where $G$ is the gravitational constant, $m$ is the mass of the planet, $M$ is the mass of the corresponding hosting star, and $M\gg m$. The  approximation on the right-most side of~\eqref{3kepl0} is due to neglecting $m$. 
Equation~\eqref{3kepl0} can be reformulated as
\begin{align}
a \approx K\left(P^2 M_h \right)^{1/3}.\label{3kepl1}
\end{align}
The mass of the hosting star $M_h$ is measured in units of Solar mass. Thus the constant $K$ includes not only the gravitational constant $G$ but also the normalizing constant for the mass.  There exist certain power laws which relate the mass $M_h$ of a star with its radius $R_h$  as well as with its temperature $T_h$. Although these relationships are not linear, it is still not particularly surprising that there are two features which are strongly correlated with the target feature, namely $(P^2 R_h )^{1/3}$ (with a correlation of 0.9999667) and $(P^2T_h )^{1/3}$ (with a correlation of 0.9995362), both of which are also treated as true positives in our study.

To assess the ability of BGNLM to detect these  features we performed $N=100$ runs for both $\mathcal{G}_1$ and $\mathcal{G}_2$ when using 1, 16, and 64 threads, respectively. In each of the threads, the algorithm was first run for 10\,000 iterations, generating new populations at every 250th iteration. Then a larger number of iterations was based on the last population, which was run until a total of 10\,000 unique models were obtained from it. The results for BGNLM are  presented in the upper part of Table~\ref{Tab:Results_Kepler}. A detection of any of the three highly correlated features described above is counted as a true positive, other features are counted as false positives, and the definitions of Pow and FDR are modified accordingly.
\begin{table}[!ht]
\resizebox{\textwidth}{!}{%
\begin{tabular}{lcccccc|cccccc}%ccc|}
\hline
&\multicolumn{6}{c|}{BGNLM{\_}PRL using $\mathcal{G}_1$}&\multicolumn{6}{c}{BGNLM{\_}PRL using $\mathcal{G}_2$}\\\hline 
Threads& $F_1$& $F_2$& $F_3$ &Pow& FP&FDR&  $F_1$& $F_2$& $F_3$ &Pow& FP&FDR\\\hline
64& 81& 71&1 &1.00& 0.02&0.01&72&71&3&0.99&0.04&0.015\\
16& 34& 41&32&0.84&0.46&0.18 &39&42&13&0.83&0.55&0.22\\
 1&    6& 5 &3 &0.141&0.65&0.86&7&4&3&0.14&1.81&0.86\\\hline\hline
&\multicolumn{6}{c|}{SymbolicRegressor using $\bm x_1$}&\multicolumn{6}{c}{SymbolicRegressor using $\bm x_2$}\\\hline 
  & $F_1$& $F_2$& $F_3$ &Pow& FP&FDR&  $F_1$& $F_2$& $F_3$ &Pow& FP&FDR\\\hline
 &36&0&0&0.36&0.64&0.64
 &20&0&0&0.20&0.80&0.80\\
 \hline
\end{tabular}
}
\caption{\label{Tab:Results_Kepler}Results for detecting Kepler's third law as in Equation \eqref{3kepl0} based on the decision rule that the posterior probability of a feature  is larger than $0.25$  (Example \ref{Ex:Kepler}). The three features  $\left(P^2 M_h\right)^{1/3}$,  $\left(P^2 R_h \right)^{1/3}$  and  $\left(P^2 T_h \right)^{1/3}$ are all counted as true positives, all other selected features - as false positives. Apart from the power to detect each of these features ($F_1, F_2$ and $F_3$) we report Pow, FP and FDR.  BGNLM is applied using the nonlinear sets (NL set) $\mathcal{G}_1$ and $\mathcal{G}_2$ and different numbers of parallel threads. \texttt{SymbolicRegressor} is applied on the functions 'add', 'sub', 'mul', 'div' and $x^{1/3}$ with a population size of 40\,000 and 50 generations and two subsets of input variables, $\bm x_1=(x_3,x_6,x_7,x_9)$ and $\bm x_2=(x_3,x_5,x_6,x_7,x_8,x_9)$.
For both methods, the procedures are repeated 100 times.} 
\end{table}

With increasing computational effort (number of threads), the  power of recovering the true physical law in a closed-form is converging to 1 and FDR is getting close to 0 for BGNLM. In this example, there is not such a big difference between the nonlinear sets $\mathcal{G}_1$ and $\mathcal{G}_2$. Note that these results were obtained with a fairly small sample size of $n = 223$ observations. 
Sections~\ref{Ex:JupiterMass} and Appendix~\ref{ap:sim.ex}    provide two more examples where BGNLM gives highly interpretable results.

\paragraph{Comparison with symbolic regression} We used the same data to identify the underlying mathematical expression using the \texttt{SymbolicRegressor} routine within the Python library \texttt{gplearn} (\url{https://gplearn.readthedocs.io/en/stable/}). We were not able to obtain
reasonable results using a set of generative functions similar to the sets $\mathcal{G}_1$ and $\mathcal{G}_2$ used for BGNLM. We, therefore, reduced the set of generative functions to  'add', 'sub', 'mul', 'div' and $x^{1/3}$ (note that the 'add' and 'sub' functions are included in the projection transformation in BGNLM, the 'del' function was included in neither $\mathcal{G}_1$ nor $\mathcal{G}_2$ but removing this function gave worse results using the \texttt{SymbolicRegressor} routine).
Even with this reduced set of functions, symbolic regression was not able to give meaningful models when using all input variables. Therefore, we also reduced the set of input variables and present the results for  two subsets, $\bm x_1=(x_3,x_6,x_7,x_9)$ corresponding to \emph{PeriodDays}, \emph{HostStarMassSlrMass}, \emph{HostStarRadiusSlrRad} and \emph{HostStarTempK} and $\bm x_2=(x_3,x_5,x_6,x_7,x_8,x_9)$, which additionally includes \emph{Eccentricity} and \emph{HostStarMetallicity}. The results are based on 50 generations with a population size of $40\,000$ within the genetic programming routine this method is based on.

Symbolic regression results in only one feature as output, and, for that reason, it always holds that FDR=FP=1-Pow.
The parallel versions of BGNLM perform much better on all measures provided, although a much wider range of functions and input variables was considered. Whenever \texttt{SymbolicRegressor} finds a model that we considered as true positive, it was always the actual target feature $\left(P^2 M_h\right)^{1/3}$ and never one of the two correlated features $\left(P^2 R_h \right)^{1/3}$  or  $\left(P^2 T_h \right)^{1/3}$.
While BGNLM automatically gives uncertainty measures for features/expressions to be included, the  \texttt{SymbolicRegressor} needs to be run several times in order to obtain similar uncertainty measures. The runtime for the
 \texttt{SymbolicRegressor} with 100 repeats was comparable (somewhat larger than) BGNLM for this example.
%\todo{FF: But what is the runtime of \texttt{SymbolicRegressor}? Is this an issue? }

\paragraph{Interpretability of BGNLM results} \label{ap:interp}

The key feature of BGNLM which allows to obtain interpretable models is that there is a set $\mathcal{G}$ of nonlinear transformations and hence feature generation becomes highly flexible. To illustrate the importance of the choice of $\mathcal{G}$, we reanalyze Example~\ref{Ex:Kepler} on Kepler's third law with BGNLM{\_}PRL using only the sigmoid function as nonlinear transformation. We also consider different restrictions on the search space:
\begin{itemize}
\item  1. $\mathcal{G}=\{\text{sigmoid}(x)\}$, $D = 5$;
\item  2. $\mathcal{G}=\{\text{sigmoid}(x)\}$, $D = 300$, and multiplication probability $P_{mu}=0$;
\item  3. $\mathcal{G}=\{\text{sigmoid}(x)\}$, $D = 300$,  $P_{mu}=0$ and $p(\gamma_j) \propto 1$.
\end{itemize}
For these settings, it is not possible to obtain the correct model in a closed form, but 
Kepler's 3rd law can still be well approximated.
In the first setting, the true model is infeasible since the cubic root function is not a part of $\mathcal{G}$ but the multiplication of features is still possible. In the second setting, multiplications are not allowed. On the other hand, there is no longer any feasible \textit{hard} restriction on the  depth of features ($D = 300$). Finally, in the third setting, all features get a uniform prior in the feature space, disregarding complexity. As a consequence of the lack of regularization, we expect that highly complex features are generated.

\begin{table}[!ht]
\resizebox{\textwidth}{!}{%
\begin{tabular}{cl|cl|cl}
\multicolumn{2}{c|}{Setting 1}&\multicolumn{2}{c|}{Setting 2}&\multicolumn{2}{c}{Setting 3}\\\hline
Fq&Feature&Fq&Feature&Fq&Feature\\\hline
99&$x_3$&100&$x_3$&100&$x_3$
\\
98&$x_3^2$&72&$\sigma$(-10.33+0.24$x_4$-8.83$x_8$)&54&$x_2$
\\
93&$x_3x_{10}$&64&$x_{10}$&21&$\sigma$(-16.91-4.94$x_2$)
\\
4&$x_3^2x_{10}$&62&$x_2$&19&$x_9$
\\
1&$x_3x_9$&16&$\sigma$(0.21+0.01$x_3$+0.20$x_7$)&16&$x_5$
\\
1&$x_3^2x_9$&9&$x_4$&14&$x_{10}$
\\
1&$x_3x_{10}^2$&7&$\sigma$(-13.11-7.76$x_8$-3.33$x_2$+0.40$x_{10}$)&10&$\sigma$(6.88$\times10^9$-3.92$x_2$+\\&&&&&3.44$\times10^9$$\sigma$(-13.57-0.17$x_4$-\\&&&&&2.84$x_2$-7.66$x_8$+0.54$x_{10}$)\\&&&&&-13.76$\times10^9$$\sigma$($\sigma$(-13.57-\\&&&&&0.17$x_4$-2.84$x_2$-7.66$x_8$+\\&&&&&0.54$x_{10}$)))
\\
1&$x_3^2x_7$&5&$\sigma$(-3.36+2.83$x_3$+0.21$x_3$-3.36$x_9$)&9&$x_4$
\\
1&$x_3^2x_6$&3&$\sigma$($\sigma$(-10.33+0.24$x_4$)-8.83$x_8$)&8&$\sigma$(-13.57-0.17$x_4$-\\&&&&&2.84$x_2$-7.66$x_8$+0.54$x_{10}$)
\\
1&$x_3^3$&3&$\sigma$(0.15+0.05$x_4$-0.01$x_3$+0.15$x_7$)&7&$\sigma$(0.21+0.21$x_3$)
\\
0&Others&4&Others&$>300$&Others\\
\hline
\end{tabular}
}
\caption{\label{dnn1}The ten most frequent features detected under Settings 1, 2, and 3 (Example \ref{Ex:Kepler}).}
\end{table}
Table~\ref{dnn1} illustrates the effects of these changes on the interpretability of models. We report the ten most frequently detected features over $N = 100$ simulations. The results  are not too surprising. Restricting the set of nonlinear transformations results in increasingly more complex features. In Setting 1, there is not a single occurrence of a sigmoid function, while in Setting 2 the feature  $\sigma$(-10.33+0.24$x_4$-8.83$x_8$) is selected in almost  75\% of the runs. Removing the complexity penalty in Setting 3 yields highly complex features, which are however no longer replicable over simulation runs. We conclude that more flexible sets of nonlinear transformations $\mathcal{G}$ allow interpretable models to be selected. These models also have a similar predictive performance to complex models based on a less flexible set of transformations. Problems with the latter approach include overfitting and a substantially higher need for memory and computational requirements, at least in the prediction stage. In contrast, BGNLM used with proper parameter settings constructs nonlinear models that achieve a state-of-the-art prediction performance, while remaining relatively simple. Hence, they represent sophisticated phenomena in a fairly parsimonious way.

\refstepcounter{Example} \label{ap:ex.epi}
\subsubsection{Example \arabic{Example}: Epigenetic data with latent Gaussian variables}

This example illustrates how the extended BGNLMM model~\eqref{DeepModel2} can be used for feature engineering while simultaneously modeling correlation structures  using latent Gaussian variables. To this end, we consider genomic and epigenomic data from \textit{Arabidopsis thaliana}.
Arabidopsis is an extremely well-studied model organism for which many genomic and epigenomic data sets are publicly available, see for example~\citeA{becker2011spontaneous}.  DNA locations with a nucleotide of type cytosine nucleobase (C) can be either methylated or not. Our focus will be  on  modeling the number of methylated reads through different covariates including (local) genomic structures, gene classes, and expression levels. The studied data was obtained from the NCBI GEO archive \cite{barrett2013ncbi}, where we consider a sample of $n = 500$ base-pairs chosen from a random genetic region of a single plant. Only cytosine nucleobases can be methylated, hence these 500 observations correspond to 500 sequential cytosine nucleobases from the selected genetic region. 

At each location $t_i$, there are $R_i$ reads of which $Y_i$ are methylated. Although the data might be modeled by  a binomial distribution, we prefer to apply a Poisson distribution for $Y_i$ with mean $\mu_i \in \mathbb{R}^{+}$ with the possibility of including an offset. This demonstrates the ability of the BGNLM approach to work with different probability distributions from the exponential family.
In the extended BGNLMM model \eqref{DeepModel2}, we use the logarithm as the canonical link function. We consider $p=14$ input variables defined as follows: A factor with three levels is coded with two dummy variables $X_1$ and $X_2$. This describes whether a location belongs to a CGH, CHH, or CHG genetic region, where H is either A, C, or T. A second factor is concerned with the distance of the location to the previous cytosine nucleobase (C), where the dummy variables  $X_3-X_8$ are used to code whether the distance  is  2, 3, 4, 5, from 6 to 20, or greater than 20, respectively,  taking a distance of 1 as reference. 
A third factor describes whether a location belongs to a gene, and if yes, whether this gene belongs to a particular group of biological interest. These groups are denoted $M_\alpha$, $M_\gamma$, $M_\delta$ and $M_0$. They are coded by 3 additional dummy variables, $X_{9}-X_{11}$, with $M_0$, the group where there are no genes,  used as a reference.  Two further covariates are derived from the expression level for a nucleobase.  The cutoffs, which define binary covariates $X_{12}$ and $X_{13}$, are either greater than 3000 or greater than 10000 fragments per kilobase of transcripts per million mapped reads. 
The last covariate, $X_{14}$, is an offset defined by the total number of reads per location $R_i \in \mathbb{N}$. The offset is modeled as an additional component of the model. Hence, it can be regarded as a matter of model choice.

We consider the following latent Gaussian variables to model spatial correlations:
\begin{description}
\item[Autoregressive process of order 1:] Assume $ \delta_i =  \rho\delta_{i-1} + \epsilon_i\in \mathbb{R}$ with $\epsilon_{i} \sim N(0,\tau^{-1})$, $i = 1,...,n$ and $|\rho|<1$. For this process, the priors on the hyper-parameters are defined as follows: first, reparametrize to $\psi_1 = \tau(1-\rho^2)$, $\psi_2 =   \log{\tfrac{1+\rho}{1-\rho}}$, then assume
$\psi_1 \sim \text{Gamma}(1,5\times 10^{-5})$, $\psi_2 \sim N(0,0.15^{-1})$.

\item[Random walk of order 1:] Assume independent increments: $\Delta \delta_i = \delta_i - \delta_{i-1}\sim N(0,\tau^{-1})$ with a prior  $\tau \sim \text{Gamma}(1,5\times10^{-5})$. 

\item[Zero-mean Ornstein-Uhlenbeck process:] This is defined via the stochastic differential equation $d\delta(t) = -\phi\delta(t)dt + \sigma dW(t)$, where $\phi>0$ and $\{W(t)\}$ is the Wiener process.  This is the continuous time analogue to the discrete time $AR(1)$ model and the process is Markovian. Let $\delta_1,...,\delta_n$ be the values of the process at increasing locations
 $t_1,...,t_n$ and define $\rho=\exp(-\phi)$ and $\tau = 2\phi/\sigma^2$. Then the conditional distribution $\delta_i|\delta_1,...,\delta_{i-1}$ is Gaussian with mean $\rho^{z_i}\delta_{i-1}$ and precision $\tau(1-\rho^{2z_i})^{-1}$ , where $z_i = t_i - t_{i-1}$. Priors on the hyper-parameters are 
 $\tau \sim \text{Gamma}(1,5\times 10^{-5})$, $\log(\phi) \sim N(0,0.2^{-1})$. 

\item[Independent Gaussian process:] Assume $\delta_i\stackrel{ind}{\sim} N(0,\tau^{-1})$ and prior $\tau \sim \text{Gamma}(1,5\times10^{-5})$.
\end{description} 
These different processes allow different spatial dependence structures of methylation rates along the genome to be modeled. They can also account for the variance which is not explained by the covariates. BGNLMM can be used to find the best combination of latent variables for modeling this dependence in combination with nonlinear feature engineering. 
The Bayesian model is completed with Gaussian priors for the regression coefficients as
\begin{align}
\beta_j|\bm{\gamma} \stackrel{ind}{\sim} & I(\gamma_j=1)N(0,\tau_{\beta}^{-1}),\quad j=1,...,p;\\
\tau_{\beta}\sim & \text{Gamma}(1,5\times10^{-5}).
\end{align}
We then use prior~\eqref{eq:modelprior} with $a=e^{- \log n}$ for $\bm{\gamma}$ and a similar prior for $\bm \lambda$ associated with selection of the latent Gaussian variables with $b = e^{-\log n}$ and each of the $r=4$ latent Gaussian processes having equal measures of complexity. Furthermore, we used $\mathcal{G} = \{\text{sigmoid}(x),\text{gauss}(x),\text{tanh}(x)$,
 $\text{atan}(x),\text{sin}(x),\text{cos}(x)\}$  with uniform $P_\mathcal{G}$, $D = 5$, $ Q=15$, and  $L = 15$ in this example.
 The marginal likelihoods are computed using the INLA approach~\cite{rue2009eINLA}.
\begin{table}[t]
\centering
\begin{tabular}{l|lc}%
\hline 
&Variable&Posterior\\
\hline 
Features&offset(log(total.bases))&1.000\\
&CGH&0.999\\
&CHG&0.952\\
\hline
Latent Gaussian variables&Random walk, order 1&1.000\\
\hline
\end{tabular}
\caption{\label{tepi} Results for Example \ref{ap:ex.epi}: Features and latent Gaussian variables with a posterior probability above 0.25 found by BGNLM using 16 parallel threads.}
\end{table}
%\end{center}

There are three features with large posterior probability (Table~\ref{tepi}): the offset for the total number of observations per location as well as two features indicating whether the location is CGH or CHG. Among the latent Gaussian variables, only the random walk process of order one was found to be important. None of the nonlinear features was important for this example. As in Example~\ref{Ex:Spam}, we observe that although our feature space includes highly nonlinear features, the regularization induced by the priors guarantees the choice of parsimonious models. Nonlinear features are only selected if they are necessary. This results in interpretable models. 

\section{{Summary and Discussion}}\label{section5}
In this article, we have introduced a new class of Bayesian generalized nonlinear regression models to perform automated feature generation, model selection, and model averaging in a Bayesian context. The genetically modified mode jumping MCMC algorithm~\cite{hubin2018novel} is adopted to estimate model posterior probabilities. The algorithm combines two key ideas: Having a population (or search space)  of highly predictive features which is regularly updated and using mode jumping MCMC  to efficiently explore models within these populations. 

In several examples, we have shown that the suggested approach can be efficient not only for prediction but also for model inference. 
Inference for BGNLMs often requires significant computational resources, hence parallelization is recommended. The resulting benefits are illustrated in several examples. 
 The penalization on complexity implied by our model prior (\ref{eq:modelprior}) yields so strong regularization that features with a depth larger than three are rarely generated in the examples considered. On the other hand, we have seen that for a maximum depth of $3$ the feature space already becomes huge and will be sufficient to model almost any nonlinear relationship. Furthermore,  features with depth $d > 3$ are not easy to interpret.

One of the main advantages of Bayesian deep learning is the possibility to quantify the uncertainty of predictions. Currently, commonly used  Bayesian approaches to deep learning rely on variational Bayes approximations~\cite{Gal2016Uncertainty}, which tend to be rather crude. 
In contrast, our approach provides well-defined and mathematically justified uncertainty measures for any parameter of interest via standard Bayesian model averaging. This also allows for the calculation of reliable credible intervals, at least for the fully Bayesian approach.
A memory-efficient way of performing parallelized BGNLM is implemented in the R-package \texttt{EMJMCMC}, which is currently available from the GitHub repository \cite{gmjref}. The package gives the user flexibility both in the choice of methods to obtain marginal likelihoods and in the prior specification.

There are still several important questions  open for further research. The first topic is concerned with the choice of model priors in Section \ref{sec:priors}. The specific structure of the prior for features in equation (\ref{glmgammaprior}) includes a parameter $a$ and a complexity measure $c(\cdot)$. For the parameter $a$, we are using $a=e^{-2}$ when interested in prediction and $a=e^{-\log n}$ when interested in model identification, since the latter is more conservative.  These choices are informed by considerations similar to those leading to modifications of AIC and BIC which are controlling FWER  \cite{bogdan2020identifying}. They worked well in the examples we have presented here but better choices could be possible and one might also like to consider introducing reasonable hyper-priors for $a$. The choice of our complexity measure was motivated by the geometric prior used by \citeA{Fritsch1} in the context of logic regression. The number of leaves of a logic tree directly translates into the operations count of a feature in our context. It will be interesting in the future to consider more general complexity measures involving both the width and the depth of features but this appears to be a research project on its own. We only want to mention here that the definition of depth for the multiplication parameter is not as straightforward as for modifications and projections and that there is quite some scope for potential improvement of the model priors. 

The second topic for further research is related to the choice of $\boldsymbol\alpha$ in the feature generating process. We used the pragmatic strategy of fixing parameters in nested features, estimating parameters on the outer layer of the new feature, and thereafter taking a nonlinear modification of the obtained feature. This approach,  inspired by the ideas from \citeA{fahlman1990cascade}, is  computationally efficient and guarantees unique estimates. We have implemented three further strategies, including optimization of weights from the last nonlinear projection, optimization with respect to all layers of a feature, and a fully Bayesian approach where all of the weights across all layers of the features are considered as model parameters. The second and third strategies are computationally more demanding than the simplest strategy and require additional assumptions on the nonlinear transformations involved. The fourth strategy provides a fully Bayesian approach which is theoretically sound but extremely slow in terms of convergence. However, by excluding projections one can reduce the model space and obtain a fully Bayesian framework, which might be useful for some applications. Studying the properties of this restricted class of BGNLM models with no projections involved is thus of interest for further research.

In Appendix~\ref{ap:add.ex1_3}, we compare the performance of BGNLM when using more complex strategies for estimating $\bm\alpha$ parameters. Interestingly, none of these strategies clearly outperforms the simple baseline strategy used in the main part of our applications. To some extent, this has to do with the complexity measure that we have used on our features within the model prior~\eqref{eq:modelprior}, which results in rather high penalties for projections compared with modifications and multiplications. Consequently, the majority of nonlinear features we have obtained in our examples do not involve projection transformations. Hence the estimation strategy of $\bm\alpha$ parameters is of minor importance. In the future, we plan to work with complexity measures for which it becomes less costly to add $\bm\alpha$ components in the projection. This will go along with further research including simulation scenarios where nested projections are part of the data generating model. 

An important difference between our approach and deep learning is that BGNLM does not fix the structure of multi-layer neural networks in advance but has the potential of learning the network structure when generating new nonlinear features.
By excluding new features that are linear combinations of previously defined features, BGNLM only includes features with \emph{different} topologies, while standard neural networks only include features with \emph{similar} topologies (but different weight parameters). In the fully Bayesian version of BGNLM,  features with similar topologies can be included, giving a Bayesian generalization of neural networks with a possibility of learning the network structure as well. Utilizing this option will, however, require more efficient algorithms.

Another important issue left for discussion is how to manage very large data sets with the BGNLM approach.
As for the marginal likelihood calculated with respect to parameters across all of the layers, only very crude approximate solutions based on the variational Bayes approach \cite{jordan1999introduction} are currently scalable for such problems~\cite{barber1998ensemble,blundell2015weight}. \citeA{mackay1992practical,denker1991transforming} applied the Laplace approximations to approximate marginal likelihood across all layers. This approach is also computationally very demanding and cannot easily be combined with the combinatorial search for the best models. \citeA{neal2012bayesian} suggested Hamiltonian Monte Carlo (HMC) to make proper Bayesian inference on Bayesian neural networks. Unfortunately, his approach is even more computationally demanding and hence does not seem scalable to high-dimensional model selection. To reduce the computational complexity of HMC and improve its scalability to large data sets, \citeA{welling2011bayesian} suggested using stochastic estimates of the gradient of the likelihood. 
Many recent articles describe the possibility of such sub-sampling combined with MCMC  \cite{quiroz2019speeding,quiroz2017speeding,quiroz2016exact,flegal2012applicability,pillai2014ergodicity}, where unbiased likelihood estimates are obtained from subsamples of the whole data set in such a way that ergodicity and the desired limiting properties of the MCMC algorithm are maintained. These methods are not part of the current implementation of BGNLM, but our approach can be adapted with a relatively moderate effort to allow sub-sampling MCMC techniques.

%\clearpage

%\newpage
\appendix

\section{Computational Complexity}\label{ap:hyper}

\begin{table}[H]
\caption{Comparison of computational complexities for model building (training phase) and prediction (test phase) for the different algorithms.}
\label{tab:comptime}
\resizebox{\textwidth}{!}{%
\begin{tabular}{lllll}
\multicolumn{1}{l}{\textbf{Method}} &
\multicolumn{1}{l}{\textbf{Alias}} &
  \multicolumn{1}{l}{\textbf{Classification/Regression}} &
  \multicolumn{1}{l}{\textbf{Training}} &
  \multicolumn{1}{l}{\textbf{Prediction}} \\\hline
%Decision   Tree & NA& C+R & $O(n^2p)$ & $O(p)$ \\
Random Forest &RFOREST &	C+R   &	$O(n^2p n_{trees})$\footnote{$n_{trees}$ is the number of classification and regression trees involved} & $O(p n_{trees})$\\
Gradient Tree Boosting & TXGBOOST &  C+R & $O(npn_{trees})$ & $O(pn_{trees})$\\
Gradient Linear Boosting & LXGBOOST &  C+R & $O(p^2n+p^3)$ & $O(p)$\\ 
Logistic Regression &LR& 	C 	& $O(p^2n+p^3)$ & $O(p)$\\ 
Gaussian Regression &GR& 	R 	& $O(p^2n+p^3)$ & $O(p)$\\ 
Neural Network &DEEPNETS& 	C+R  &	 $O(w^2n+w^3)$\footnote{$w=pp_{l_1}+\sum_{i \ in 1,...,n_{layers}}p_{l_i}p_{l_{i+1}}$}  &	$O(w)$\\
Naive Bayes & NBAYES & C & $O(np+2p)$ & $O(p)$\\
Variational Bayes Gaussian Regression &VARBAYES& 	R 	& $O(p^2n+p^3)$ & $O(p)$\\ 
Lasso Regression & LASSO & C+R & $O(p^2n+p^3)$ & $O(p)$\\ 
Ridge Regression & RIDGE & C+R & $O(p^2n+p^3)$ & $O(p)$\\ 
Bayesian Generalized Linear Model & BGLM & C+R &$O(2^p(p^2n+p^3))$&$O(p2^p)\text{ or }O(p)$\footnote{full model averaging is considered before "or", whilst the median probability model or model averaging of the small number of "best" models is considered after "or".}\\
Bayesian Generalized Nonlinear Model & BGNLM & C+R &$O(2^q(Q^2n+Q^3))$\footnote{in practice, one would only run inference for a fixed number of iterations and per iteration, the complexity is $O(Q^2n+Q^3)$. }&$O(Q2^q)\text{ or }O(Q)$\footnote{full model averaging is considered before "or", whilst the median probability model or model averaging of the small number of "best" models is considered after "or".}\\
\hline
\end{tabular}
}
\end{table}

\section{Further Applications}\label{ap:furtherexamp}

In Appendix~\ref{Ex:NeoAsteroids}, we provide an additional example of classification of Asteroids. Moreover, we address two extra examples of model inference: In the first of them (Appendix~\ref{Ex:JupiterMass}), we recover the planetary mass law based on the exoplanets data introduced in Section~\ref{Ex:Kepler}. In the latter (Appendix~\ref{ap:sim.ex}), we address a complex simulation scenario of a logic regression case \cite{hubin2018novel}, where we show the ability of BGNLM to recover highly nonlinear Boolean interactions.

\subsection{Example 6: Neo Asteroids Classification} \label{Ex:NeoAsteroids}
The data set \cite{Neodata} addressed in this example consists of characteristic measures of 20\,766 asteroids, some of which are classified as potentially hazardous objects, whilst others are not. Measurements of the following nine explanatory variables are available: \textit{Mean anomaly, Inclination, Argument of perihelion, Longitude of the ascending node, Rms residual, Semi-major axis, Eccentricity, Mean motion, Absolute magnitude}. It can be downloaded from \url{2016.spaceappschallenge.org}.

\begin{table}[h!]{
\resizebox{\textwidth}{!}{%
\begin{tabular}{llll}%
\hline 
Algorithm&ACC&FNR&FPR\\\hline
BGLM&0.9999 (0.9999,0.9999)&0.0001 (0.0001,0.0001)&0.0002 (0.0002,0.0002)\\
BGNLM{\_}PRL&0.9998 (0.9986,1.0000)&0.0002 (0.0001,0.0021)&0.0000 (0.0000,0.0000)\\
BGNLM&0.9998 (0.9942,1.0000)&0.0002 (0.0001,0.0082)&0.0002 (0.0000,0.0072)\\
LASSO&0.9991 (-,-)&0.0013 (-,-)&{0.0000} (-,-)\\
RIDGE&0.9982 (-,-)&0.0026 (-,-)&0.0000 (-,-)\\
LXGBOOST&0.9980 (0.9980,0.9980)&0.0029 (0.0029,0.0029)&0.0000 (0.0000,0.0000)\\
LR&0.9963 (-,-)&0.0054 (-,-)&0.0000 (-,-)\\
DEEPNETS&0.9728 (0.8979,0.9979)&0.0384 (0.0018,0.1305)&0.0000 (0.0000,0.0153)\\
TXGBOOST&0.8283 (0.8283,0.8283)&0.0005 (0.0005,0.0005)&0.3488 (0.3488,0.3488)\\
RFOREST&0.8150 (0.6761,0.9991)&0.1972 (0.0003,0.3225)&0.0162 (0.0000,0.3557)\\
NBAYES&0.6471 (-,-)&0.0471 (-,-)&0.4996 (-,-)\\
\hline
\end{tabular}}
}
\caption{\label{t1}Comparison of performance (ACC, FPR, FNR) of different algorithms for NEO objects data (Example \ref{Ex:NeoAsteroids}). See caption of Table~\ref{t2} for details.}
\end{table}
 The training sample consisted of $n=64$ objects (32 of which are potentially hazardous objects, whilst the other 32 are not) and the test sample of the remaining $n_p=20\,702$ objects. Table~\ref{t1} shows that even with such a small training set most methods tend to perform very well. The na{\i}ve Bayes classifier has the lowest accuracy with a huge number of false positives. The tree-based methods also have comparably small accuracy, where tree-based gradient boosting, in addition, delivers too many false positives. Random forests tend to have on average too many false negatives, though there is a huge variation of performance between different runs ranging from almost perfect accuracy down to accuracy as low as the na{\i}ve Bayes classifier. In this example, we used $\mathcal{G} = \{\text{gauss}(x),\text{tanh}(x)$,
$\text{atan}(x),\text{sin}(x)\}$, $a = e^{-2}$, $D=4$, $L=15$ and $Q = 15$.
 
 The BGNLM model is among the best methods for this data set, while BGLM has the best median performance. This indicates that nonlinear structures do not play an important role in this example and all the other algorithms based on linear features (LASSO, RIDGE, logistic regression, linear-gradient boosting)  performed similarly well. BGLM gives the same result in all simulation runs, the parallel version of GMJMCMC for BGNLM gives almost the same model as BGLM and only rarely adds some nonlinear features, whereas the single-threaded version of GMJMCMC for BGNLM much more often includes nonlinear features (Table \ref{Tab:complexityAst}). The slight variation between simulation runs for the single-threaded version of GMJMCMC suggests that despite the generally good performance of BGNLM the algorithm has not fully converged in some runs.

\begin{table}[!ht]
\centering
\begin{tabular}{cc}
\begin{tabular}{lrrr}%
\multicolumn{4}{l}{\underline{\textbf{Example 6}: Asteroid}}\\
%\hline
compl.&BGNLM&BGNLM{\_}PRL&BGLM\\
1&8.96&9.00&9.00\\
2&2.58&0.05&0.00\\
\hline
Total&11.54&9.05&9.00
\\
\hline
\end{tabular}
\end{tabular}
\caption{\label{Tab:complexityAst}Mean frequency distribution of feature complexities detected by the different BGNLM algorithms in 100 simulation runs for Asteroid data (Example \ref{Ex:NeoAsteroids}). The final row for each example gives the mean of the total number of features in 100 simulation runs which had a posterior probability larger than 0.1.}
\end{table}

As shown in Table~\ref{Tab:complexityAst}, all reported nonlinear features had a complexity of 2. As mentioned previously the parallel version of BGNLM detected way fewer nonlinear features than the simple versions. This suggests that GMJMCMC has not completely converged in some simulation runs. Approximately half of the nonlinear features were modifications and the other half were multiplications. In this example, not a single detected projection was found significant by the GMJMCMC. 
 
\subsection{Example 7: Jupiter Mass of the Planet}  \label{Ex:JupiterMass}

In this example, we consider the planetary mass as a function of its radius and density. It is common in astronomy to use the measures of Jupiter as units and a basic physical law gives the nonlinear relation 
\begin{align} \label{mass_law}
	m_p\approx R^3_p\times \rho_p \;.
\end{align}
Here, just as described in Table~\ref{tab:alias}, $m_p$ is the planetary mass measured in units of Jupiter mass (denoted \textit{PlanetaryMassJpt} from now on). Similarly, the radius of the planet $R_p$  is measured in units of Jupiter radius  and the density of the planet $\rho_p$  is measured in units of Jupiter density. Hence in the data set the variable  \textit{RadiusJpt} refers to $R_p$, and \textit{PlanetaryDensJpt} denotes $\rho_p$. The approximation sign is used because the planets are not exactly spherical but rather almost spherical ellipsoids.

A BGNLM with a Gaussian observation model and identity link function is used to model \textit{PlanetaryMassJpt} as a function of the following ten potential input variables: \textit{TypeFlag}, \textit{RadiusJpt}, \textit{PeriodDays}, \textit{SemiMajorAxisAU}, \textit{Eccentricity}, \textit{HostStarMassSlrMass}, \textit{HostStarRadiusSlrRad}, \textit{HostStarMetallicity}, \textit{HostStarTempK}, \textit{PlanetaryDensJpt} (see Table~\ref{tab:alias} for details). 
To illustrate to which extent the performance of BGNLM depends on the number of parallel runs, we furthermore consider computations with 1, 4, and 16 threads, respectively. To evaluate the capability of BGNLM to detect true signals, we run the algorithm for a given number of threads for $N = 100$ times. 

In each of the threads, the algorithms were first run for 10\,000 iterations, with population changes at every 250 iterations, and then for a larger number of iterations based on the last population (until a total number of 10\,000 unique models was obtained). Results for BGNLM using different numbers of threads are summarized in Table \ref{Tab:Results_Ex2} for $\mathcal{G}_1$ and $\mathcal{G}_2$, which are the same as in Example~\ref{Ex:Kepler}. All other tuning and hyper-parameters of the model and the algorithm are also the same as in Example~\ref{Ex:Kepler}.

\begin{table}[h!]
\centering
\begin{tabular}{lccc|ccc}%ccc|}
\hline
&\multicolumn{2}{c}{BGNLM{\_}PRL}&$\mathcal{G}_1$&\multicolumn{2}{c}{BGNLM{\_}PRL}&$\mathcal{G}_2$\\\hline
Threads&Pow&FP&FDR&Pow&FP&FDR\\\hline
16&1.00 &0.00&0.00&0.93&0.36&0.215\\%
4&0.79&0.40&0.21 &0.69&0.49&0.34\\%
1&0.42&1.21&0.58 &0.42&1.25&0.58\\\hline
\end{tabular}
\caption{\label{Tab:Results_Ex2}Pow, FP, and FDR for detecting the mass law in Equation \eqref{mass_law} based on the decision rule that the posterior probability of a feature  is larger than $\eta^* = 0.25$ (Example 7). The feature $R\times R\times R\times \rho_p$  is counted as true positive, all other selected features as false positive. BGNLM is applied using the nonlinear sets (NL set) $\mathcal{G}_1$ and $\mathcal{G}_2$ and different numbers of parallel threads.} 
\end{table}

Clearly, the more resources are available the better BGNLM performs. 
BGNLM manages to find the correct model with rather large power (reaching gradually one) and small FDR (reaching gradually zero) when the number of parallel threads is increased. When using only a single thread, it often happens that instead of the correct feature some closely related features are selected (see the Excel sheet \texttt{Mass.xlsx} in the supplementary material \cite{supt} for more details). Results for the set $\mathcal{G}_1$ are slightly better than for $\mathcal{G}_2$ which illustrates the importance of having a good set of transformations when interested in model inference. The power is lower and FDR is larger for $\mathcal{G}_2$ which is mainly due to the presence of $|x|^{7/2}$ in the set of nonlinearities. The feature $R^{7/2}_p\times \rho_p$ is quite similar to the correct law \eqref{mass_law}  and moreover has lower complexity than the feature $R^3 \rho_p$ due to how $\mathcal{G}_2$ is defined. Hence, it is not surprising that it is often selected, specifically when BGNLM was not run sufficiently long to fully explore features with larger complexity.

\subsection{Example 8: Simulated Data With Complex Combinatorial Structures}\label{ap:sim.ex}

\begin{table}[h!]
\centering
\begin{tabular}{lcc}%c}
\hline
&BGNLM&BLRM\\\hline%&RGMJ(NA)&GMJ(LR) \\\hline
$X_7$&1.0000&0.9900\\% & 1.0000&0.9900\\
$X_8$&1.0000&1.0000\\%&1.0000&1.0000\\
$X_2X_9$&1.0000&1.0000\\%&0.6800&1.0000\\
$X_{18}X_{21}$&1.0000&0.9600\\%&0.9700&0.9600\\
$X_{1}X_{3}X_{27}$&1.0000&1.0000\\%&0.0000&1.0000\\
$X_{12}X_{20}X_{37}$&1.0000&0.9900\\%&0.0000&0.9900\\
$X_{4}X_{10}X_{17}X_{30}$&0.9900&0.9100\\%&0.0000&0.9100\\
$X_{11}X_{13}X_{19}X_{50}$&0.9800&0.3800\\%&0.0000&0.3800\\
Overall power&0.9963&0.9038\\%&0.4213&0.9038\\
FP&0.5100&1.0900\\ %& 14.640&1.0900\\
FDR&0.0601&0.1310\\%&0.8453&0.1310\\
\hline\\
\end{tabular}
\caption{\label{simres}Results for Example~\ref{ap:sim.ex}. Pow for individual trees, overall power (average power over trees),  FP, and FDR are compared between BGNLM and Bayesian logic regression.} 
\end{table}
In this simulation study, we generated $N = 100$ data sets  with $n=1\,000$ observations and $p=50$ binary covariates. The covariates were assumed to be independent and were simulated for each simulation run  as $X_{j}\sim \text{Bernoulli}(0.5)$ for $j  = 1,\dots,50$. In the first simulation study the responses were simulated according to a Gaussian distribution with error variance $\sigma^2 = 1$ and individual expectations specified as follows: 
\begin{eqnarray*}
E\left\{Y|\boldsymbol{X}\right\} & = & 1 + 1.5 X_{7} + 1.5 X_{8}+ 6.6 X_{18}X_{21} + 3.5 X_{2}X_{9}
+ 9 X_{12}X_{20}X_{37} +\\ 
&& 7 X_{1}X_{3}X_{27} 
+ 7 X_{4}X_{10}X_{17}X_{30} 
+ 7 X_{11}X_{13}X_{19}X_{50}.
\end{eqnarray*}
For BGNLM, we set $\mathcal{G} = \{\text{sigmoid}(x)\,\text{gauss}(x),\text{tanh}(x)$,
 $\text{atan}(x),\text{sin}(x),\text{cos}(x)\}$, with a uniform $P_\mathcal{G}$. We also use $a = e^{-\log n}$, $D = 4$, $L = 40$, and $Q = 40$.
We compare the results of  BGNLM with those for the Bayesian logic regression model in \citeA{hubin2018novel}. The latter model differs from the current one in that the model prior is different. For a given logical tree (which is the only allowed feature form), we use $a^{c(L_j)} = (N(s_j))^{-1}  \; , \quad  s_j\leq C_{max}$, where $N(s)=\binom{m}{s}\ 2^{2s-2}$. $Q$ and priors for the model parameters are the same as defined in the BGNLM model. All algorithms were run on 32 threads until the same number of models were visited after the last change of the model space. In particular, in each of the threads, the algorithms were run until $20\,000$ unique models were obtained after the last population of models had been generated at iteration $15\,000$. Specification of the Bayesian Logic Regression model corresponds exactly to the one used in simulation Scenario 6 in \citeA{hubin2018novel}. In this example, a detected feature is only counted as a true positive if it exactly coincides with a feature of the data generating model. The results are summarized in Table~\ref{simres}.  Detection in this example corresponds to the features having marginal inclusion probabilities above $0.5$ after the search is completed.

GMJMCMC performed exceptionally well for fitting this BGNLM. 
The version of GMJMCMC algorithm for fitting Bayesian Logic Regression (BLRM) from \citeA{hubin2018novel} in this case performed almost as well as GMJMCMC for BGNLM, except for a significant drop in power in one of the four-way multiplications.  This is however not too surprising because the multiplication transformation of BGNLM models perfectly fits the data generating model whereas the logic regression model focuses on general logic expressions and provides in that sense a larger chance to generate features which are closely related to the data generating four-way multiplication \cite{hubin2018novel}.

\section{Results for Alternative Strategies of Specifying Weights}\label{ap:add.ex1_3}

In this section, the predictive performance of the four alternative strategies for specification of $\bm\alpha$ is compared (with the first three given in Section~\ref{sub:Estimate_alpha} and the last one in Section~\ref{sec:fullb}).
Tables~\ref{t12}-\ref{t11} show the result for the breast cancer data (Example~\ref{Ex:BreastCancer}), the spam data (Example~\ref{Ex:Spam}), and the NEO asteroids classification problem (Example~6), respectively. Comparing  Table~\ref{t12} with Table~\ref{t2}, Table~\ref{t13} with Table~\ref{t3}, and Table~\ref{t11} with Table~\ref{t1}, we see that there is no substantial difference in predictive performance between the strategies used for specifying weights for the three addressed data sets. On one hand, this might indicate that there is no real difference in which of the strategies we use for the optimization of projection-based features. But, on the other hand, this might indicate that these features, in general, do not play a big role in predictions as a result of very strong regularization of them. Further research on this is required.

\begin{table}[h!]{
\caption{\label{t12}Comparison of performance  (ACC, FPR, FNR) of alternative feature engineering strategies  (indicated with $\_2,\_3,\_4$ in the table)  for Example~\ref{Ex:BreastCancer}. For methods with the random outcome, the median measures (with minimum and maximum in parentheses) are displayed.
The algorithms are sorted according to median power.}
\resizebox{\textwidth}{!}{%
\begin{tabular}{llll}%
\hline 
Algorithm&ACC&FNR&FPR\\\hline
BGNLM{\_}1&0.9695 (0.9554,0.9789)&0.0536 (0.0479,0.0809)&0.0148 (0.0037,0.0326)\\
BGNLM{\_}3&0.9695 (0.9507,0.9789)&0.0536 (0.0479,0.0862)&0.0148 (0.0000,0.0361)\\
BGNLM{\_}4&0.9671 (0.9577,0.9789)&0.0536 (0.0305,0.0756)&0.0184 (0.0000,0.0361)\\
BGNLM{\_}2&0.9671 (0.9531,0.9789)&0.0536 (0.0422,0.0862)&0.0184 (0.0000,0.0361)\\
\hline
\end{tabular}
}}
\end{table}

\begin{table}[h!]{
\caption{\label{t13}Comparison of performance (ACC, FPR, FNR) of alternative feature engineering strategies   for Example~\ref{Ex:Spam}.  See caption of Table~\ref{t12} for details.}
\resizebox{\textwidth}{!}{
\begin{tabular}{llll}%
\hline 
Algorithm&ACC&FNR&FPR\\\hline
BGNLM{\_}1&0.9243 (0.9113,0.9328)&0.0927 (0.0808,0.1116)&0.0552 (0.0465,0.0658)\\
BGNLM{\_}2&0.9243 (0.9100,0.9357)&0.0927 (0.0780,0.1103)&0.0545 (0.0445,0.0686)\\
BGNLM{\_}3&0.9237 (0.9100,0.9321)&0.0924 (0.0766,0.1122)&0.0548 (0.0474,0.0714)\\
BGNLM{\_}4&0.9237 (0.9113,0.9315)&0.0931 (0.0821,0.1077)&0.0562 (0.0470,0.0714)\\
\hline
\end{tabular}
}}
\end{table}

\begin{table}[h!]{
\centering
\resizebox{\textwidth}{!}{
\begin{tabular}{llll}%
\hline 
Algorithm&ACC&FNR&FPR\\\hline
BGNLM{\_}3&0.9998 (0.9959,1.0000)&0.0002 (0.0001,0.0056)&0.0002 (0.0000,0.0042)\\
BGNLM{\_}1&0.9998 (0.9942,1.0000)&0.0002 (0.0001,0.0082)&0.0002 (0.0000,0.0072)\\
BGNLM{\_}2&0.9998 (0.9933,1.0000)&0.0002 (0.0001,0.0089)&0.0002 (0.0000,0.0048)\\
BGNLM{\_}4&0.9998 (0.9932,0.9999)&0.0002 (0.0001,0.0097)&0.0002 (0.0000,0.0042)\\
\hline
\end{tabular}}}
\caption{\label{t11} Comparison of performance (ACC, FPR, FNR) of alternative feature engineering strategies for Example~6. See caption of Table~\ref{t12} for details.}
\end{table}

\bigskip
\section*{Supplementary Material}

\noindent \textbf{R package:} \textit{R} package \textit{EMJMCMC} for doing inference in the BGNLM model (R)(G) MJMCMC \cite{gmjref} which is available on GitHub at \url{http://aliaksah.github.io/EMJMCMC2016}.

\noindent \textbf{Data and code:} supplementary data and code for all the examples as well as the excel sheets for all of the results are given in \citeA{supt} which is available on GitHub at \url{https://github.com/aliaksah/EMJMCMC2016/tree/master/supplementaries/BGNLM}.

\section*{Acknowledgements} We thank the CELS project (\url{http://www.mn.uio.no/math/english/research/groups/cels/}) at the University of Oslo for giving us the opportunity, inspiration, and motivation to write this article. We would also like to acknowledge NORBIS (\url{https://norbis.w.uib.no/}) for funding an academic stay of the first author in Vienna. We are grateful to Prof. Robert Feldt for providing us links to symbolic regression. Finally, we thank Dr. Anders L{\o}land (Norwegian Computing Center), Dr. Kory D. Johnson (Vienna University of Economics and Business), and Dr. Johan Pensar (University of Oslo) for proofreading the article and checking its language. Last but not least, we also acknowledge the HPC cluster from \url{sigma2.no} for providing us with the computational resources used for obtaining the results of this paper. More specifically, we thank projects NN9862K and NN9244K.

\vskip 0.2in
\bibliography{sample}
\bibliographystyle{theapa}

\end{document}